\newtheorem{theorem}{Theorem}
\newtheorem{lemma}{Lemma}
\newtheorem{assumption}{Assumption}
\newcommand{\CirEig}{\bfD}
\newcommand{\bs}{\boldsymbol}
\newcommand{\NoiCovMat}{\bs{\Lambda}}
\newcommand{\tcpp}[1]{\tcp{\emph{\small{#1}}}}
\newcommand{\undS}{\underline{\bfS}}
\newcommand{\EigSq}{\underline{\CirEig}}
\DeclareMathOperator{\Tr}{tr}
\def\bff{{\mathbf{f}}}
\def\bfu{{\mathbf{u}}}
\def\bfA{{\mathbf{A}}}
\def\bfB{{\mathbf{B}}}
\def\bfC{{\mathbf{C}}}
\def\bfD{{\mathbf{D}}}
\def\bfE{{\mathbf{E}}}
\def\bfF{{\mathbf{F}}}
\def\bfH{{\mathbf{H}}}
\def\bfJ{{\mathbf{J}}}
\def\bfL{{\mathbf{L}}}
\def\bfM{{\mathbf{M}}}
\def\bfN{{\mathbf{N}}}
\def\bfP{{\mathbf{P}}}
\def\bfQ{{\mathbf{Q}}}
\def\bfR{{\mathbf{R}}}
\def\bfS{{\mathbf{S}}}
\def\bfU{{\mathbf{U}}}
\def\bfV{{\mathbf{V}}}
\def\bfW{{\mathbf{W}}}
\def\bfX{{\mathbf{X}}}
\def\bfY{{\mathbf{Y}}}
\def\bsx{{\boldsymbol{x}}}
\def\wtm{\widetilde{m}}
\newcommand{\apriori}{\emph{a priori }}
\newcommand{\ALLobs}{\boldsymbol{\Psi}}
\newcommand{\MATima}{\bfX}
\newcommand{\noisevar}[1]{{s^2_{#1}}}
\newcommand{\nbbandima}{m_{\lambda}}
\newcommand{\Covsub}{\boldsymbol{\Sigma}}
\newcommand{\hypervect}{\boldsymbol{\Phi}}
\newcommand{\argmax}{\mathrm{arg}\max}
\newcommand{\argmin}{\mathrm{arg}\min}
\newcommand{\Vzeros}[1]{\boldsymbol{0}_{#1}}
\newcommand{\Id}[1]{\textbf{I}_{#1}}
\newcounter{algo}
\renewcommand{\thealgo}{\arabic{algo}} 
\begin{document}
\title{Fast Fusion of Multi-Band Images \\ Based on Solving a Sylvester Equation}

\author{\IEEEauthorblockN{Qi Wei, \IEEEmembership{Student Member,~IEEE},
Nicolas Dobigeon, \IEEEmembership{Senior Member,~IEEE},\\ and Jean-Yves
Tourneret, \IEEEmembership{Senior Member,~IEEE}}
\thanks{Part of this work has been supported by the Chinese Scholarship Council,
the Hypanema ANR Project n$^\circ$ANR-12-BS03-003 and by ANR-11-LABX-0040-CIMI within the program
ANR-11-IDEX-0002-02 within the thematic trimester on image processing. }
\thanks{Qi Wei, Nicolas Dobigeon and Jean-Yves
Tourneret are with University of Toulouse, IRIT/INP-ENSEEIHT, 2 rue
Camichel, BP 7122, 31071 Toulouse cedex 7, France (e-mail: \{qi.wei,
nicolas.dobigeon, jean-yves.tourneret\}@enseeiht.fr).} }

\maketitle
\begin{abstract}
This paper proposes a fast multi-band image fusion algorithm, which combines
a high-spatial low-spectral resolution image and a low-spatial high-spectral
resolution image. The well admitted forward model is explored to form the likelihoods
of the observations. Maximizing the likelihoods leads to solving a Sylvester equation.
By exploiting the properties of the circulant and downsampling
matrices associated with the fusion problem, a closed-form solution for
the corresponding Sylvester equation is obtained explicitly,
getting rid of any iterative update step. Coupled with the alternating direction method
of multipliers and the block coordinate descent method, the proposed algorithm can be
easily generalized to incorporate prior information for the fusion problem, allowing a
Bayesian estimator. Simulation results
show that the proposed algorithm achieves the same performance as existing
algorithms with the advantage of significantly decreasing the computational complexity
of these algorithms.
\end{abstract}

\begin{keywords}
Multi-band image fusion, Bayesian estimation, circulant matrix,
Sylvester equation, alternating direction method of multipliers, block coordinate descent.
\end{keywords}

\section{Introduction}
\label{sec:intro}
\subsection{Background}
\label{subsec:background}
In general, a multi-band image can be represented as a
three-dimensional data cube indexed by three exploratory variables $(x,y,\lambda)$, where $x$ and $y$
are the two spatial dimensions of the scene, and $\lambda$ is the spectral dimension (covering a
range of wavelengths). Typical examples of multi-band images include hyperspectral
(HS) images \cite{Landgrebe2002}, multi-spectral (MS) images \cite{Navulur2006}, integral
field spectrographs \cite{Bacon2001}, magnetic resonance spectroscopy images
etc. However, multi-band imaging generally suffers from the limited spatial resolution of the data
acquisition devices, mainly due to an unsurpassable tradeoff between spatial and spectral sensitivities \cite{Chang2007}. For example, HS images benefit from excellent spectroscopic properties with hundreds of bands but are limited by their relatively low spatial resolution compared with MS and panchromatic (PAN) images (which are acquired in much fewer bands). As a consequence, reconstructing a high-spatial
and high-spectral multi-band image from two degraded and complementary observed images is a challenging but crucial issue that has been addressed in various scenarios. In particular, fusing a high-spatial low-spectral resolution image and a
low-spatial high-spectral image is an archetypal instance of multi-band image reconstruction, such as pansharpening (MS+PAN) \cite{Aiazzi2012} or hyperspectral pansharpening (HS+PAN) \cite{Loncan2015}.
Generally, the linear degradations applied to the observed images with respect
to (w.r.t.) the target high-spatial and high-spectral image reduce to
spatial and spectral transformations. Thus, the multi-band image fusion
problem can be interpreted as restoring a three dimensional data-cube from two
degraded data-cubes. A more precise description of the problem formulation is provided in the following paragraph.


\subsection{Problem Statement}
\label{subsec:prob_stat}
To better distinguish spectral and spatial degradations, the pixels of the target
multi-band image, which is of high-spatial and high-spectral
resolution, can be rearranged to build an $\nbbandima \times n$
matrix $\bfX$, where $\nbbandima$ is the number of spectral bands and
$n=n_r \times n_c$ is the number of pixels in each band ($n_r$ and $n_c$ represents the number of rows and columns respectively). In other words, each column of the matrix $\bfX$ consists of a $\nbbandima$-valued pixel and each row gathers all the pixel values in a given spectral band.
Based on this pixel ordering, any linear operation applied on the left (resp. right) side of $\bfX$ describes a spectral (resp. spatial) degradation.

In this work, we assume that two complementary images of high-spectral or high-spatial resolutions, respectively, are available to reconstruct the target high-spectral and high-spatial resolution target image. These images result from linear spectral and spatial degradations of the full resolution image $\bfX$, according to the well-admitted model
\begin{equation}
\begin{array}{ll}
\label{eq:obs_general}
\bfY_{\mathrm{L}} =  {\bfL} \MATima + \bfN_{\mathrm{L}}\\
\bfY_{\mathrm{R}} =  \MATima \bfR + \bfN_{\mathrm{R}}
\end{array}
\end{equation}
where
\begin{itemize}
\item $\MATima = \left[\bsx_1,\ldots,\bsx_{n}\right] \in \mathbb{R}^{\nbbandima \times n}$ is
	the full resolution target image,
\item ${\bfY}_{\mathrm{L}}\in\mathbb{R}^{n_{\lambda} \times n}$ and ${\bfY}_{\mathrm{R}} \in \mathbb{R}^{\nbbandima \times m}$
are the observed spectrally degraded and spatially degraded images,
\item $m=m_r\times m_c$ is the number of pixels of the high-spectral resolution image,
\item $n_{\lambda}$ is the number of bands of the high-spatial resolution image,
\item $\bfN_{\mathrm{L}}$ and $\bfN_{\mathrm{R}}$ are additive terms that include
both modeling errors and sensor noises.
\end{itemize}

The noise matrices are assumed to be
distributed according to the following matrix normal distributions\footnote{The probability density function $p(\bfX | \bfM, \bs{\Sigma}_r, \bs{\Sigma}_c)$
of a matrix normal distribution $\mathcal{MN}_{r,c}(\mathbf{M}, \bs{\Sigma}_r, \bs{\Sigma}_c)$
is defined by
\begin{equation*}
\begin{split}
p(\bfX | \bfM, \bs{\Sigma}_r, \bs{\Sigma}_c)=
\frac{\exp\left( -\frac{1}{2}\mathrm{tr}\left[ \bs{\Sigma}_c^{-1} (\mathbf{X} - \mathbf{M})^{T} \bs{\Sigma}_r^{-1} (\mathbf{X} - \mathbf{M}) \right] \right)}{(2\pi)^{rc/2} |\bs{\Sigma}_c|^{r/2} |\bs{\Sigma}_r|^{c/2}}
\end{split}
\end{equation*}
where $\bfM \in \mathbb{R}^{r \times c}$ is the mean matrix, $\bs{\Sigma}_r \in \mathbb{R}^{r \times r}$ is
the row covariance matrix and $\bs{\Sigma}_c \in \mathbb{R}^{c \times c}$ is the column covariance matrix.}
\begin{equation*}
    \begin{array}{ll}
      \bfN_{\mathrm{L}}  \sim \mathcal{MN}_{\nbbandima,m}(\Vzeros{\nbbandima,m},\NoiCovMat_{\mathrm{L}} ,\Id{m} ) \\
      \bfN_{\mathrm{R}}  \sim \mathcal{MN}_{n_{\lambda},n}(\Vzeros{n_{\lambda},n}, \NoiCovMat_{\mathrm{R}}, \Id{n}).
    \end{array}
\end{equation*}
Note that no particular structure is assumed for the row covariance matrices $\NoiCovMat_{\mathrm{L}}$ and
$\NoiCovMat_{\mathrm{R}}$ except that they are both positive definite, which allows for considering spectrally colored noises. Conversely, the column
covariance matrices are assumed to be the identity matrix to reflect the fact that the noise is pixel-independent.

In most practical scenarios, the spectral degradation ${\bfL} \in\mathbb{R}^{n_{\lambda} \times \nbbandima}$
only depends on the spectral response of the sensor, which can be \emph{a priori} known or
estimated by cross-calibration \cite{Yokoya2013cross}.
The spatial degradation $\bfR$ includes warp, translation, blurring, decimation, etc. As the warp and translation
can be attributed to the image co-registration problem and mitigated by precorrection, only blurring and decimation degradations,
denoted $\bfB$ and $\bfS$ are considered in this work. If the spatial blurring is assumed to be
space-invariant, $\bfB \in \mathbb{R}^{n \times n}$ owns the specific property of being a cyclic convolution operator acting on the
bands. The matrix ${\bf S}\in\mathbb{R}^{n\times m}$ is a $d=d_r \times d_c$ uniform downsampling operator, which
has $m=n/d$ ones on the block diagonal and zeros elsewhere, and such that ${\bfS}^T \bfS=\Id{m}$. Note that
multiplying by $\bfS^T$ represents zero-interpolation to increase the number of pixels from $m$ to $n$.
Therefore, assuming $\bfR$ can be decomposed as $\bfR= {\bf BS} \in \mathbb{R}^{n \times m}$, the
fusion model \eqref{eq:obs_general} can be rewritten as
\begin{equation}
\begin{array}{ll}
\bfY_{\mathrm{L}} =  {\bfL} \MATima + \bfN_{\mathrm{L}}\\
\bfY_{\mathrm{R}} =  \MATima {\bf BS} + \bfN_{\mathrm{R}}
\end{array}
\label{eq:obs_specific}
\end{equation}
where all matrix dimensions and their respective relations are
summarized in Table \ref{tb:size}.

\begin{table}[h!]
\centering \caption{Notations}
\small
\setlength{\tabcolsep}{0.5mm}
\begin{tabular}{c|c|c}
\hline
\textbf{Notation} & \textbf{Definition} & \textbf{Relation} \\
\hline
$m_r$      & row number of $\bfY_{\mathrm{R}}$    & $m_r=n_r / d_r$\\
$m_c$      & column number of $\bfY_{\mathrm{R}}$  & $m_c=n_c / d_c$\\
$m $	   & number of pixels in each band of $\bfY_{\mathrm{R}}$ & $m=m_r \times m_c$\\
$n_r$      & row number of $\bfY_{\mathrm{L}}$    &   $n_r=m_r \times d_r$\\
$n_c$      & column number of $\bfY_{\mathrm{L}}$ & $n_c=m_c \times d_c$\\
$n $	   & number of pixels in each band of $\bfY_{\mathrm{L}}$ & $n=n_r \times n_c$\\
$d_r$      & decimation factor in row     &$d_r=n_r / m_r$\\
$d_c$      & decimation factor in column  &$d_c=n_c / m_c$ \\
$d$        & decimation factor   & $m=d_r \times d_c$\\
\hline
\end{tabular}
\label{tb:size}
\end{table}

This matrix equation \eqref{eq:obs_general} has been widely advocated in
the pansharpening and HS pansharpening problems, which consist of
fusing a PAN image with an MS or an HS image \cite{Amro2011survey,Gonzalez2004fusion,Loncan2015}. Similarly, most of the techniques developed to fuse MS and HS images also rely on
a similar linear model \cite{Hardie2004,Molina1999,Molina2008,Zhang2012,Yokoya2012coupled,Wei2014Bayesian,Wei2015TGRS}. From an applicative point of view,
this problem is also important as motivated by recent national programs,
e.g., the Japanese next-generation space-borne HS image suite (HISUI), which fuses co-registered MS and HS images acquired over the same scene under the same conditions \cite{Yokoya2013}.

To summarize, the problem of fusing high-spectral and high-spatial resolution images
can be formulated as estimating the unknown matrix $\bfX$ from \eqref{eq:obs_specific}.
There are two main statistical estimation methods that can be used to solve this problem. These methods are based on maximum likelihood (ML) or on Bayesian inference. ML estimation is purely
data-driven while Bayesian estimation relies on prior information, which can be
regarded as a regularization (or a penalization) for the fusion problem.
Various priors have been already advocated to regularize the multi-band image fusion problem, such as Gaussian priors \cite{Wei2013,Wei2015whispers},
sparse representations \cite{Wei2015TGRS} or total variation (TV) \cite{Simoes2015} priors.
The choice of the prior usually depends on the information resulting from previous experiments
or from a subjective view of constraints
affecting the unknown model parameters \cite{Robert2007,Gelman2013}.

Computing the ML or the Bayesian estimators (whatever the form chosen for the prior) is a challenging task, mainly due to the large size of $\bfX$ and to the
presence of the downsampling operator $\bfS$, which prevents any direct use
of the Fourier transform to diagonalize the blurring operator $\bfB$. To overcome this difficulty, several computational strategies have been designed to approximate the estimators. Based on a Gaussian prior modeling, a Markov chain Monte Carlo (MCMC) algorithm has been implemented in  \cite{Wei2013} to generate a collection
of samples asymptotically distributed according to the posterior distribution of $\bfX$. The Bayesian estimators of $\bfX$ can then be approximated using these samples. Despite this formal appeal,
MCMC-based methods have the major drawback of being computationally expensive, which prevents their effective use when processing images of large size. Relying on exactly the same prior model, the strategy developed in \cite{Wei2015whispers} exploits an alternating direction method of multipliers (ADMM) embedded in a block coordinate descent method (BCD) to compute the maximum a posterior (MAP) estimator of $\bfX$. This optimization strategy allows the numerical complexity to be greatly decreased when compared to its MCMC counterpart. Based on a prior built from a sparse representation, the fusion problem is solved in \cite{Simoes2015,Wei2015TGRS} with the split augmented Lagrangian shrinkage algorithm (SALSA) \cite{Afonso2011}, which is an
instance of ADMM.

In this paper, contrary to the algorithms described above, a much more efficient method is proposed to solve explicitly
an underlying Sylvester equation (SE) associated with the fusion problem derived from \eqref{eq:obs_specific}. This solution can be implemented \emph{per se} to compute the ML estimator in a computationally efficient manner. The proposed solution has also the great advantage of being easily generalizable within a Bayesian framework when considering various priors. The MAP estimators associated with a Gaussian prior similar to \cite{Wei2013,Wei2015whispers} can be directly computed thanks to the proposed strategy. When handling more complex priors such as \cite{Simoes2015,Wei2015TGRS}, the SE solution can be conveniently embedded within a conventional ADMM or a BCD algorithm.

\subsection{Paper organization}
The remaining of this paper is organized as follows.
Section II studies the optimization problem to be addressed
in absence of any regularization, i.e., in an ML framework.
The proposed fast fusion method is presented in Section III
and generalized to Bayesian estimators associated with various priors in
Section IV. Section V presents experimental results assessing the accuracy and
the numerical efficiency of the proposed fusion method. Conclusions are
finally reported in Section \ref{sec:concls}.


\section{Problem Formulation}
\label{sec:ObsModel}
Using the statistical properties of the noise matrices $\bfN_{\mathrm{L}}$ and $\bfN_{\mathrm{R}}$,
$\bfY_{\mathrm{L}}$ and $\bfY_{\mathrm{R}}$ have matrix Gaussian distributions, i.e.,
\begin{equation}
\begin{array}{ll}
\label{eq:Likelihood}
p\left(\bfY_{\mathrm{L}}|\bfX\right) = \mathcal{MN}_{n_{\lambda},n}({\bfL}\bfX,\NoiCovMat_{\mathrm{L}}, \Id{n})\\
p\left(\bfY_{\mathrm{R}}|\bfX\right) = \mathcal{MN}_{\nbbandima,m}(\bfX \bfB \bfS, \NoiCovMat_{\mathrm{R}}, \Id{m}).
\end{array}
\end{equation}

As the collected measurements $\bfY_{\mathrm{L}}$ and $\bfY_{\mathrm{R}}$ have been
acquired by different (possibly heterogeneous) sensors, the noise matrices $\bfN_{\mathrm{L}}$
and $\bfN_{\mathrm{R}}$ are sensor-dependent and can be generally assumed to be statistically independent.
Therefore, $\bfY_{\mathrm{L}}$ and $\bfY_{\mathrm{R}}$ are independent conditionally upon
the unobserved scene $\bfX$. As a consequence, the joint likelihood function of the observed
data is
\begin{equation}
\label{eq:Likelihood_joint}
p\left(\bfY_{\mathrm{L}},\bfY_{\mathrm{R}}|\bfX\right)=p\left(\bfY_{\mathrm{L}}|\bfX\right)p\left(\bfY_{\mathrm{R}}|\bfX\right).
\end{equation}
Since adjacent HS bands are known to be highly correlated, the HS vector $\bsx_i$
usually lives in a subspace whose dimension is much smaller than the number
of bands $\nbbandima$ \cite{Chang1998,Bioucas2008}, i.e., $\bf X = HU$
where $\bfH$ is an orthogonal matrix such that ${\bfH}^T {\bfH}=\Id{\wtm_{\lambda}}$ and
$\bfU \in \mathbb{R}^{\wtm_{\lambda} \times n}$ is the projection of $\bfX$ onto the subspace
spanned by the columns of $\bfH \in \mathbb{R}^{\nbbandima  \times \wtm_{\lambda}}$.

Defining $\ALLobs=\left\{\bfY_{\mathrm{L}}, \bfY_{\mathrm{R}}\right\}$ as the set of the observed images,
the negative logarithm of the likelihood is
\begin{equation*}
\begin{array}{ll}
&-\log p\left(\ALLobs|\bfU\right)  =  -\log p\left(\bfY_{\mathrm{L}}|\bfU\right) -\textrm{log } p\left(\bfY_{\mathrm{R}}|\bfU\right) +C\\
&\quad \quad =\frac{1}{2}\Tr \left( {\left(\bfY_{\mathrm{R}}- \bf{HUBS}\right)^T\NoiCovMat_{\mathrm{R}}^{-1}\left(\bfY_{\mathrm{R}}- \bf{HUBS}\right)} \right) + \\
&\quad \quad \frac{1}{2} \Tr \left(\left(\bfY_{\mathrm{L}} -{{\bfL}\bf HU}\right)^T \NoiCovMat_{\mathrm{L}}^{-1}\left(\bfY_{\mathrm{L}} -{{\bfL}\bf HU}\right)\right)+C
\end{array}
\end{equation*}
where $C$ is a constant. Thus, calculating the ML estimator of $\bfU$ from the observed images $\ALLobs$,
i.e., maximizing the likelihood can be achieved
by solving the following problem
\begin{equation}
\label{eq:neglog_likeli}
\argmin\limits_{\bfU} L(\bfU)
\end{equation}
where
\begin{equation*}
\begin{split}
L(\bfU) = \Tr \left( {\left(\bfY_{\mathrm{R}}- \bf{HUBS}\right)^T\NoiCovMat_{\mathrm{R}}^{-1}\left(\bfY_{\mathrm{R}}- \bf{HUBS}\right)} \right) + \\
\Tr \left(\left(\bfY_{\mathrm{L}} -{{\bfL}\bf HU}\right)^T \NoiCovMat_{\mathrm{L}}^{-1}\left(\bfY_{\mathrm{L}} -{{\bfL}\bf HU}\right)\right).
\end{split}
\end{equation*}

Note that it is also obvious to formulate the optimization problem \eqref{eq:neglog_likeli}
from the linear model \eqref{eq:obs_specific} directly in the least-squares (LS) sense \cite{Lawson1974}.
However, specifying the distributions of the noises $\NoiCovMat_{\mathrm{L}}$ and $\NoiCovMat_{\mathrm{R}}$
allows us to consider the case of colored noises (band-dependent) more easily by introducing
the covariance matrices $\NoiCovMat_{\mathrm{R}}$ and $\NoiCovMat_{\mathrm{L}}$,
leading to the weighting LS problem \eqref{eq:neglog_likeli}.

In this paper, we prove that the minimization of \eqref{eq:neglog_likeli} w.r.t. the target image $\bfU$ can be solved
analytically, without any iterative optimization scheme or Monte Carlo 
based method. The resulting closed-form solution to the optimization problem is presented in Section \ref{sec:FastFus}.
Furthermore, it is shown in Section \ref{sec:gen_Fast} that the proposed method can be easily generalized
to Bayesian fusion methods with appropriate prior distributions.

\section{Fast Fusion Scheme}
\label{sec:FastFus}
\subsection{Sylvester equation}
\label{subsec:opt_ima}
Minimizing \eqref{eq:neglog_likeli} w.r.t. $\bfU$ is equivalent
to force the derivative of $L(\bfU)$ to be zero, i.e., ${ \mathrm{d} L(\bfU)}/{ \mathrm{d} \bfU}  =0 $,
leading to the following matrix equation
\begin{equation}
\label{eq:sylvester}
\begin{array}{ll}
\bfH^H \NoiCovMat_{\mathrm{R}}^{-1} {\bf{HUBS}} \left(\bf BS \right)^H + \left({\left({\bfL}\bfH\right)}^H \NoiCovMat_{\textrm{L}}^{-1} {{\bfL}\bfH}\right) \bfU \\
= \bfH^H \NoiCovMat_{\mathrm{R}}^{-1} {\bfY}_\mathrm{R} \left(\bf BS \right)^H + {\left({\bfL}\bfH\right)}^H \NoiCovMat_{\textrm{L}}^{-1} {\bfY}_\mathrm{L}.\\
\end{array}
\end{equation}
As mentioned in Section \ref{subsec:prob_stat}, the difficulty for solving \eqref{eq:sylvester} results from
the high dimensionality of $\bfU$ and the presence of the downsampling matrix $\bfS$. In this work, we will show that
Eq. \eqref{eq:sylvester} can be solved analytically with some assumptions summarized below.

\begin{assumption}
\label{ass:3}
The blurring matrix $\bfB$ is a circulant matrix.
\end{assumption}
A consequence of this assumption is that $\bfB$ can be decomposed as $\bfB = \bfF \CirEig \bfF^H$ and
$\bfB^H = \bfF \CirEig^{\ast} \bfF^H$, where $\bfF \in \mathbb{R}^{n\times n}$
is the discrete Fourier transform (DFT) matrix ($\bfF\bfF^H=\bfF^H\bfF=\Id{n}$), $\bfD \in \mathbb{R}^{n\times n}$
is a diagonal matrix and $\ast$ represents the conjugate operator.

\begin{assumption}
\label{ass:2}
The decimation matrix $\bfS$ corresponds to downsampling the original signal and its conjugate transpose $\bfS^H$ interpolates the decimated signal with zeros.
\end{assumption}
A decimation matrix satisfies the property $\bfS^H\bfS=\Id{m}$. Moreover,
the matrix $\undS \triangleq
\bfS\bfS^H \in \mathbb{R}^{n\times n}$ is symmetric and idempotent, i.e., $\undS=\undS^H$ and $\undS \undS^H=\undS^2=\undS$.
For a practical implementation, multiplying an image by $\undS$ can be achieved by doing entry-wise multiplication with
an $n\times n$ mask matrix with ones in the sampled position and zeros elsewhere.

Note that the assumptions w.r.t. the blurring matrix $\bfB$ and the decimation matrix $\bfS$ have been widely used in
image processing application, like super-resolution \cite{Elad1997,Park2003SR}, fusion \cite{Hardie2004,Simoes2015}, etc.

After multiplying \eqref{eq:sylvester} on both sides
by $\left(\bfH^H \NoiCovMat_{\mathrm{R}}^{-1} \bfH\right)^{-1}$\footnote{The invertibility of the matrix
$\bfH^H \NoiCovMat_{\mathrm{R}}^{-1} \bfH$ is guaranteed since $\bfH$ has full column rank
and $\NoiCovMat_{\mathrm{R}}$ is positive definite.}
we obtain
\begin{equation}
\label{eq:zeroforce}
\bfC_1 \bfU +  \bfU \bfC_2  = \bfC_3
\end{equation}
where
\begin{equation*}
\begin{array}{ll}
\bfC_1=\left(\bfH^H \NoiCovMat_{\mathrm{R}}^{-1} \bfH\right)^{-1}\left({\left({\bfL}\bfH\right)}^H \NoiCovMat_{\textrm{L}}^{-1} {{\bfL}\bfH}\right)\\
\bfC_2={\bfB \undS \bfB}^H\\
\bfC_3=\left(\bfH^H \NoiCovMat_{\mathrm{R}}^{-1} \bfH\right)^{-1} \left(\bfH^H \NoiCovMat_{\mathrm{R}}^{-1} {\bfY}_\mathrm{R} \left(\bf BS \right)^H + {\left({\bfL}\bfH\right)}^H\NoiCovMat_{\textrm{L}}^{-1} {\bfY}_\mathrm{L} \right).
\end{array}
\end{equation*}
Eq. \eqref{eq:zeroforce} is a generalized Sylvester matrix equation \cite{Bartels1972}.
It is well known that a SE  has a unique solution if and only if an arbitrary sum of the eigenvalues
of $\bfC_1$ and $\bfC_2$ is not equal to zero \cite{Bartels1972}.

\subsection{Existence of a solution}
In this section, we study the eigenvalues of $\bfC_1$ and $\bfC_2$ to check if \eqref{eq:zeroforce}
has a unique solution. As matrix $\bfC_2=\bfB \undS \bfB^H$ is positive semi-definite, its eigenvalues
include positive values and zeros \cite{Horn2012}. In order to study the eigenvalues of $\bfC_1$, Lemma \ref{lem:eig_sym}
is introduced below.

\begin{lemma}
\label{lem:eig_sym}
If matrix $\bfA_1 \in \mathbb{R}^{n\times n}$ is symmetric (resp. Hermitian) positive definite and matrix $\bfA_2 \in \mathbb{R}^{n\times n}$ is symmetric (resp. Hermitian)
positive semi-definite, the product $\bfA_1 \bfA_2$ is diagonalizable and all the eigenvalues of $\bfA_1 \bfA_2$ are non-negative.
\end{lemma}

\begin{proof}
See proof in Appendix \ref{app:eig_sym}.
\end{proof}

According to Lemma \ref{lem:eig_sym}, since the matrix $\bfC_1$ is the product of a symmetric positive definite matrix
$\left(\bfH^H \NoiCovMat_{\mathrm{R}}^{-1} \bfH\right)^{-1}$ and a symmetric semi-definite matrix
${\left({\bfL}\bfH\right)}^H \NoiCovMat_{\textrm{L}}^{-1} {{\bfL}\bfH}$, it is diagonalizable
and all its eigenvalues are non-negative. As a consequence, the eigen-decomposition of $\bfC_1$ can be
expressed as $\bfC_1=\bfQ{\bf{\Lambda}}_1\bfQ^{-1}$, where
${\bf{\Lambda}}_1=\textrm{diag}\left(\lambda_1,\cdots,\lambda_{\wtm_{\lambda}}\right)$ ($\textrm{diag}\left(\lambda_1,\cdots,\lambda_{\wtm_{\lambda}}\right)$ is a diagonal matrix whose elements are $\lambda_1,\cdots,\lambda_{\wtm_{\lambda}}$)
and $\lambda_i \geq 0$, $\forall i$. Therefore, as long as zero is not an eigenvalue of $\bfC_1$
(or equivalently $\bfC_1$ is invertible), any sum of eigenvalues of $\bfC_1$
and $\bfC_2$ is  different from zero (more accurately, this sum is $>0$), leading to the
existence of a unique solution of \eqref{eq:zeroforce}.

However, the invertibility
of $\bfC_1$ is not always guaranteed depending on the forms and dimensions of $\bfH$ and $\bfL$.
For example, if $n_{\lambda}<\wtm_{\lambda}$, meaning the number of MS bands is smaller than the subspace dimension, the matrix
${\left({\bfL}\bfH\right)}^H \NoiCovMat_{\textrm{L}}^{-1} {{\bfL}\bfH}$ is rank deficient and thus there is not a unique solution	
of \eqref{eq:zeroforce}. In cases when $\bfC_1$ is singular, a regularization or prior information is necessary to be introduced
to ensure \eqref{eq:zeroforce} has a unique solution. In this section, we focus on the case when $\bfC_1$ is non-singular. The generalization to Bayesian estimators based on specific priors already considered in the literature will be elaborated in Section \ref{sec:gen_Fast}.

\subsection{A classical algorithm for the Sylvester matrix equation}
A classical algorithm for obtaining a solution of the SE is the Bartels-Stewart algorithm \cite{Bartels1972}. This algorithm decomposes $\bfC_1$ and $\bfC_2$ into Schur forms using a QR algorithm and solves the resulting triangular
system via back-substitution. However, as the matrix $\bfC_2={{\bf{B}\undS \bfB}^H}$ is
very large for our application ($n \times n$, where $n$ is the number of image pixels), it is unfeasible to construct the matrix $\bfC_2$, let alone use the QR algorithm to compute its Schur form (which has the computational cost $\mathcal{O}(n^3)$ arithmetical operations).
The next section proposes an innovative strategy to obtain an analytically expression of the SE \eqref{eq:zeroforce} by exploiting the specific properties of the matrices $\bfC_1$ and $\bfC_2$ associated with the fusion problem.

\subsection{Proposed closed-form solution}
Using the decomposition $\bfC_1=\bfQ{\bf{\Lambda}}_C\bfQ^{-1}$ and multiplying both sides
of \eqref{eq:zeroforce} by $\bfQ^{-1}$ leads to
\begin{equation}
{\bf{\Lambda}}_C \bfQ^{-1} \bfU + \bfQ^{-1}\bfU \bfC_2  = \bfQ^{-1} \bfC_3.
\label{eq:sylv_1}
\end{equation}
Right multiplying \eqref{eq:sylv_1} by $\bf FD$ on both sides and using the definitions of matrices $\bfC_2$ and $\bf B$ yields
\begin{equation}
{\bf{\Lambda}}_1 \bfQ^{-1} {\bf{UFD}}+ \bfQ^{-1}{\bf{UFD}}\left( \bfF^H \undS{\bfF} \EigSq  \right) = \bfQ^{-1} \bfC_3 \bf FD
\label{eq:sylv_2}
\end{equation}
where $\EigSq =\left(\CirEig^{\ast}\right)\CirEig$ is a real diagonal matrix.
Note that ${\bf{UF}\CirEig}={\bf{UBF}} \in \mathbb{R}^{\wtm_{\lambda} \times n}$ can be interpreted as the
Fourier transform of the blurred target image, which is a complex matrix. Eq. \eqref{eq:sylv_2} can be regarded
as a SE w.r.t. $\bfQ^{-1} {\bf{UFD}}$, which has a simpler form compared to \eqref{eq:zeroforce}
as ${\bf{\Lambda}}_C$ is a diagonal matrix.

The next step in our analysis is to simplify the matrix $\bfF^H \undS{\bfF}\EigSq$ appearing on the left hand side of \eqref{eq:sylv_2}.
First, it is important to note that the matrix $\bfF^H \undS{\bfF}\EigSq$ has a specific structure since all its columns contain the same
blocks (see Eq. \eqref{eq:block_prod} in Appendix \ref{app:MMat}). Using this property, by multiplying left and right by specific matrices,
one obtains a block matrix whose nonzero blocks are located in its first (block) row (see \eqref{eq:MMat}).
More precisely, introduce the following matrix
\begin{equation}
{\bfP}=
\underbrace{\left[
\begin{array}{ccccc}
\Id{m} &\bs{0} &\cdots &\bs{0}\\
-\Id{m}& \Id{m}& \cdots &\bs{0}\\
\vdots&\vdots &\ddots &\vdots\\
-\Id{m}& \bs{0}& \cdots &\Id{m}
\end{array}
\right]}_{d}
\label{eq:Pmat}
\end{equation}
whose inverse\footnote{Note that left multiplying a matrix by $\bfP$ corresponds to subtracting the first row blocks from
all the other row blocks. Conversely, right multiplying by the matrix $\bfP^{-1}$ means
replacing the first (block) column by the sum of all the other (block) columns.} can be easily computed
\begin{equation*}
{\bfP}^{-1}=
\underbrace{\left[
\begin{array}{ccccc}
\Id{m} &\bs{0} &\cdots &\bs{0}\\
\Id{m}& \Id{m}& \cdots &\bs{0}\\
\vdots&\vdots &\ddots &\vdots\\
\Id{m}& \bs{0}& \cdots &\Id{m}
\end{array}
\right]}_{d}.
\end{equation*}
Right multiplying both sides of \eqref{eq:sylv_2} by ${\bfP}^{-1}$ leads to
\begin{equation}
{\bf{\Lambda}}_C \bar{\bfU} + \bar{\bfU}\bfM  = \bar{\bfC}_3
\label{eq:sylv_3}
\end{equation}
where $\bar{\bfU}={\bfQ^{-1}\bf{UFD}}{\bfP}^{-1}$, $\bfM={\bfP} \left( \bfF^H \undS{\bfF}\EigSq\right) {\bfP}^{-1}$
and $\bar{\bfC}_3={\bfQ^{-1}\bfC_3}{\bf FD} {\bfP}^{-1}$.
Eq. \eqref{eq:sylv_3} is a Sylvester matrix equation w.r.t. $\bar{\bfU}$ whose solution is significantly easier than for \eqref{eq:sylv_1} because the matrix $\bfM$ has a simple form outlined in the following lemma.
\begin{lemma}
The following equality holds
\begin{equation}
\bfM= {\bfP} \left( \bfF^H \undS{\bfF}\EigSq\right) {\bfP}^{-1}=
\frac{1}{d}\left[
\begin{array}{ccccc}
\sum\limits_{i=1}^{d}{\EigSq}_i &{\EigSq}_2 &\cdots &{\EigSq}_d\\
\bs{0}& \bs{0}& \cdots &\bs{0}\\
\vdots&\vdots &\ddots &\vdots\\
\bs{0}& \bs{0}& \cdots &\bs{0}
\end{array}
\right] \label{eq:MMat}
\end{equation}
where the matrix $\EigSq$ has been partitioned as follows
\begin{equation*}
\EigSq=\left[
\begin{array}{ccccc}
\EigSq_1 &\bs{0} &\cdots &\bs{0}\\
\bs{0}& \EigSq_2& \cdots &\bs{0}\\
\vdots&\vdots &\ddots &\vdots\\
\bs{0}& \bs{0}& \cdots &\EigSq_d
\end{array}
\right]
\end{equation*}
with $\EigSq_i$  $m \times m$ real diagonal matrices.
 \label{lem:MMat}
\end{lemma}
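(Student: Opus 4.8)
The plan is to obtain $\bfM$ directly by tracking how the two elementary block operations encoded by $\bfP$ and $\bfP^{-1}$ act on the already-structured matrix $\bfF^H\undS\bfF\EigSq$, with no eigenanalysis or matrix inversion needed. The starting point is the folding identity for the decimation mask established in Appendix~\ref{app:MMat} (Eq.~\eqref{eq:block_prod}): since $\undS=\bfS\bfS^H$ is the diagonal sampling mask, conjugating it by the DFT aliases the spectrum, so that $\bfF^H\undS\bfF$, read as a $d\times d$ array of $m\times m$ blocks, has every block equal to $\frac{1}{d}\Id{m}$. Multiplying on the right by the block-diagonal matrix $\EigSq=\mathrm{diag}(\EigSq_1,\ldots,\EigSq_d)$ then scales the $j$-th block column by $\EigSq_j$, so the $(k,j)$ block of $\bfF^H\undS\bfF\EigSq$ equals $\frac{1}{d}\EigSq_j$ for every row index $k$; equivalently, each block column is constant down its $d$ rows, which is the property announced just before the lemma.

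Granting this structure, the first step is to left-multiply by $\bfP$. As recorded in the footnote to \eqref{eq:Pmat}, this subtracts the first block row from each of the remaining block rows. Because all $d$ block rows of $\bfF^H\undS\bfF\EigSq$ coincide with the single vector of blocks $\frac{1}{d}[\EigSq_1,\ldots,\EigSq_d]$, the subtraction annihilates block rows $2,\ldots,d$ and leaves the first untouched, producing a matrix whose only nonzero blocks lie in its first block row.

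The second step is to right-multiply by $\bfP^{-1}$. Again by the footnote, this replaces the first block column by the sum of all $d$ block columns while leaving the other block columns unchanged. Applied to the matrix from the previous step, the new $(1,1)$ block becomes $\frac{1}{d}\sum_{i=1}^{d}\EigSq_i$, the $(1,j)$ block stays $\frac{1}{d}\EigSq_j$ for $j\ge 2$, and every block below the first row remains zero. This is exactly the right-hand side of \eqref{eq:MMat}, which proves the lemma.

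The genuinely nontrivial ingredient is the folding identity asserting that all blocks of $\bfF^H\undS\bfF$ coincide; this is where the explicit form of the uniform $d_r\times d_c$ decimation and the separability of the two-dimensional DFT must be exploited, and it is the computation deferred to Appendix~\ref{app:MMat}. I expect this to be the main obstacle: once that identity is in hand, the lemma reduces to the bookkeeping above on the two one-sided block operations carried by $\bfP$ and $\bfP^{-1}$.
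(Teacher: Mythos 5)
Your proposal is correct and follows essentially the same route as the paper: the paper's proof likewise reduces the lemma to the folding identity $\bfF^H\undS\bfF=\frac{1}{d}\bfJ_d\otimes\Id{m}$ (stated as a separate sub-lemma and proved in its own appendix), observes that right-multiplication by the block-diagonal $\EigSq$ makes every block row equal to $\frac{1}{d}[\EigSq_1,\ldots,\EigSq_d]$, and then applies exactly the two block operations of $\bfP$ and $\bfP^{-1}$ that you describe. You correctly identify the folding identity as the only nontrivial ingredient and defer it, just as the paper does.
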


\begin{proof}
See Appendix \ref{app:MMat}.
\end{proof}

Finally, using the specific form of $\bfM$ provided by Lemma~\ref{lem:MMat}, the solution $\bar{\bfU}$ of the SE \eqref{eq:sylv_3} can be computed block-by-block as stated in the following theorem.
\begin{theorem}
\label{the:Ubar}
  Let $(\bar{\bfC}_3)_{l,j}$ denotes the $j$th block of the $l$th band of $\bar{\bfC}_3$ for any $l=1,\cdots,\wtm_{\lambda}$. Then, the solution  $\bar{\bfU}$ of the SE \eqref{eq:sylv_3} can be decomposed as
\begin{equation}
\bar{\bfU}=
\left[
\begin{array}{ccccc}
\bar{\bfu}_{1,1}& \bar{\bfu}_{1,2} &\cdots &\bar{\bfu}_{1,d}\\
\bar{\bfu}_{2,1}& \bar{\bfu}_{2,2} &\cdots &\bar{\bfu}_{2,d}\\
\vdots    & \vdots     &\ddots &\vdots\\
\bar{\bfu}_{\wtm_{\lambda},1}& \bar{\bfu}_{\wtm_{\lambda},2}& \cdots &\bar{\bfu}_{\wtm_{\lambda},d}
\end{array}
\right]
\label{eq:U_vector}
\end{equation}
with
\begin{equation}
\bar{\bfu}_{l,j}= \left\{
\begin{array}{ll}
(\bar{\bfC}_3)_{l,j} \left(\frac{1}{d}\sum\limits_{i=1}^{d}{\EigSq}_i+\lambda_C^l \Id{n}\right)^{-1}, & j=1,\\
\frac{1}{\lambda_C^l}\left[(\bar{\bfC}_3)_{l,j} - \frac{1}{d} \bar{\bfu}_{l,1} {\EigSq}_j\right], & j=2,\cdots,d.
\end{array}
\right.
\end{equation}
\end{theorem}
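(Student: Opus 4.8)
The plan is to exploit the two structural features already in hand: the diagonality of ${\bf{\Lambda}}_C$ and the sparsity pattern of $\bfM$ established in Lemma~\ref{lem:MMat}. Because ${\bf{\Lambda}}_C=\textrm{diag}(\lambda_C^1,\ldots,\lambda_C^{\wtm_{\lambda}})$ is diagonal, left-multiplication by ${\bf{\Lambda}}_C$ acts independently on each row of $\bar{\bfU}$, so the matrix equation \eqref{eq:sylv_3} splits into $\wtm_{\lambda}$ decoupled row equations, one per band $l$. First I would fix a band index $l$ and write out the $l$th row of \eqref{eq:sylv_3}, namely $\lambda_C^l\,\bar{\bfU}_{l,:}+\bar{\bfU}_{l,:}\bfM=(\bar{\bfC}_3)_{l,:}$, where the row $\bar{\bfU}_{l,:}=[\bar{\bfu}_{l,1},\ldots,\bar{\bfu}_{l,d}]$ is partitioned into $d$ row blocks of width $m$.

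Next I would carry out the product $\bar{\bfU}_{l,:}\bfM$ block by block. Since Lemma~\ref{lem:MMat} shows that only the first block row of $\bfM$ is nonzero, every block of $\bar{\bfU}_{l,:}\bfM$ involves the single block $\bar{\bfu}_{l,1}$: the $k$th block equals $\frac{1}{d}\bar{\bfu}_{l,1}\bigl(\sum_{i=1}^d\EigSq_i\bigr)$ for $k=1$ and $\frac{1}{d}\bar{\bfu}_{l,1}\EigSq_k$ for $k=2,\ldots,d$. Equating the $k$th block of the row equation to $(\bar{\bfC}_3)_{l,k}$ then yields, for $k=1$, the relation $\bar{\bfu}_{l,1}\bigl(\lambda_C^l\Id{m}+\frac{1}{d}\sum_{i=1}^d\EigSq_i\bigr)=(\bar{\bfC}_3)_{l,1}$, and for $k=2,\ldots,d$, the relation $\lambda_C^l\,\bar{\bfu}_{l,k}+\frac{1}{d}\bar{\bfu}_{l,1}\EigSq_k=(\bar{\bfC}_3)_{l,k}$. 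This is precisely a triangular cascade: the $j=1$ block decouples completely, and the remaining blocks follow from it by back-substitution.

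Solving the $j=1$ equation requires inverting $\lambda_C^l\Id{m}+\frac{1}{d}\sum_{i=1}^d\EigSq_i$, which is immediate because it is diagonal (so the identity there is understood as $\Id{m}$). I would record that each $\EigSq_i$ is a diagonal block of $\EigSq=\left(\CirEig^{\ast}\right)\CirEig$ and hence has nonnegative entries, while $\lambda_C^l>0$ under the working assumption that $\bfC_1$ is nonsingular; the diagonal entries are therefore strictly positive and the inverse exists. This reproduces the first line of the stated formula. For $j\geq2$ I would simply solve the second relation for $\bar{\bfu}_{l,j}$ by dividing through by $\lambda_C^l$, again legitimate since $\lambda_C^l>0$, which yields the second line.

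The argument is almost entirely bookkeeping once the structure of $\bfM$ from Lemma~\ref{lem:MMat} is invoked, so I do not anticipate a serious obstacle. The only point demanding care is the block-multiplication step: one must verify that the vanishing of all but the first block row of $\bfM$ is exactly what collapses the coupled Sylvester equation into a triangular system, and must keep the partition dimensions ($\wtm_{\lambda}$ rows, $d$ blocks of width $m$) consistent throughout. Establishing invertibility of the $j=1$ diagonal block and the nonvanishing of $\lambda_C^l$, both inherited from the nonsingularity of $\bfC_1$ assumed in this subsection, completes the verification.
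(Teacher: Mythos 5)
Your proposal is correct and follows essentially the same route as the paper's Appendix C: substitute the block structure of $\bfM$ from Lemma~\ref{lem:MMat} into \eqref{eq:sylv_3}, observe that only $\bar{\bfu}_{l,1}$ enters the product $\bar{\bfU}\bfM$, solve the decoupled first-block equation, and back-substitute for $j\geq 2$. Your added remarks on why $\lambda_C^l>0$ and why the diagonal block in the $j=1$ step is invertible (and is really $\Id{m}$, not $\Id{n}$) are correct refinements that the paper leaves implicit.
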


\begin{proof}
See Appendix \ref{app:theorem}.
\end{proof}

Note that $\bfu_{l,j} \in{\mathbb{R}^{1\times m}}$ denotes the $j$th block
of the $l$th band. Note also that the matrix
$\frac{1}{d}\sum\limits_{i=1}^{d}{\EigSq}_i+\lambda_C^l \Id{n}$ appearing
in the expression of $\bar{\bfu}_{l,1}$ is an $n \times n$ real
diagonal matrix whose inversion is trivial. The final estimator of $\bfX$ is
obtained as follows\footnote{It may happen that the diagonal matrix $\bfD$ does not have full rank (containing zeros in diagonal) or is ill-conditioned (having
very small numbers in diagonal), resulting from the property of blurring kernel. In this case, $\bfD^{-1}$ can be replaced by $\left(\bfD+\tau \Id{m}\right)^{-1}$
for regularization purpose, where $\tau$ is a small penalty parameter \cite{Lagendijk1990}.}
\begin{equation}
\hat{\bfX}={\bf HQ} \bar{\bfU} {\bfP} {\CirEig}^{-1} {\bfF}^H.
\end{equation}
Algo. \ref{Algo:Sylvester_Algo} summarizes the steps required to
calculate the estimated image $\hat{\bfX}$.

\IncMargin{1em}
\begin{algorithm}[h!]
\label{Algo:Sylvester_Algo}
\KwIn{$\bfY_{\mathrm{L}}$, $\bfY_{\mathrm{R}}$, $\NoiCovMat_{\mathrm{L}}$,
 $\NoiCovMat_{\mathrm{R}}$, $\bfL$, $\bfB$, $\bfS$, $\bfH$ }
 \tcpp{Circulant matrix decomposition: $\bfB= {\bfF \bfD \bfF}^H$}
$\bf{D} \leftarrow  \textrm{Dec} \left(\bfB\right)$\;
$\EigSq =\CirEig^{\ast}\CirEig$\;
 \tcpp{Calculate $\bfC_1$}
$\bfC_1 \leftarrow \left(\bfH^H \NoiCovMat_{\mathrm{R}}^{-1} \bfH\right)^{-1}\left({\left({\bfL}\bfH\right)}^H \NoiCovMat_{\textrm{L}}^{-1} {{\bfL}\bfH}\right)$\;
 \tcpp{Eigen-decomposition of $\bfC_1$: $\bfC_1=\bfQ{\bf{\Lambda}}_C\bfQ^{-1}$}
$\left({\bfQ,\bf{\Lambda}}_C\right) \leftarrow \textrm{EigDec}\left(\bfC_1\right)$\;
 \tcpp{Calculate $\bar{\bfC}_3$}
$\bar{\bfC}_3 \leftarrow \bfQ^{-1} \left({\bfH}^H \NoiCovMat_{\mathrm{R}}^{-1} {\bfH}\right)^{-1} ({\bfH}^H \NoiCovMat_{\mathrm{R}}^{-1} {\bfY}_\mathrm{R} \left(\bf BS \right)^H $\
$+ {\left({\bfL}\bfH\right)}^H \NoiCovMat_{\textrm{L}}^{-1} {\bfY}_\mathrm{L} ){\bf BF} {\bfP}^{-1}$\;
 \tcpp{Calculate $\bar{\bfU}$ block by block ($d$ blocks) and band by band ($\wtm_{\lambda}$ bands)}
\For{$l=1$ \KwTo $\wtm_{\lambda}$}{
\tcpp{Calculate the $1$st block in $l$th band}
$\bar{\bfu}_{l,1}= (\bar{\bfC}_3)_{l,1} \left(\frac{1}{d}\sum\limits_{i=1}^{d}{\EigSq}_i+\lambda_C^l \Id{n}\right)^{-1}$ \;
\tcpp{Calculate other blocks in $l$th band} 
\For{$j=2$ \KwTo $d$}{
$\bar{\bfu}_{l,j}= \frac{1}{\lambda_C^l}\left((\bar{\bfC}_3)_{l,j}- \frac{1}{d}\bar{\bfu}_{l,1} {\EigSq}_j\right)$\;
}
}
Set $\bfX= {\bf HQ}\bar{\bfU} {\bf PD}^{-1} {\bfF}^H$\;
\KwOut{${\MATima}$}
\caption{Fast Fusion of Multi-band Images}
\DecMargin{1em}
\end{algorithm}

\subsection{Complexity Analysis}
The most computationally expensive part of the proposed algorithm is the computation of matrices $\bfD$ and $\bar{\bfC}_3$
because of the FFT and iFFT operations. Using the notation $\bfC_4=\bfQ^{-1} \left({\bfH}^H \NoiCovMat_{\mathrm{R}}^{-1} {\bfH}\right)^{-1}$, the matrix $\bar{\bfC}_3$ can be rewritten
\begin{equation} \label{eq:complex}
\begin{array}{lll}
\bar{\bfC}_3  \hspace{-0.3cm} &=  \bfC_4 \left({\bfH}^H \NoiCovMat_{\mathrm{R}}^{-1} {\bfY}_\mathrm{R} \left(\bf BS \right)^H + {\left({\bfL}\bfH\right)}^H \NoiCovMat_{\textrm{L}}^{-1} {\bfY}_\mathrm{L}\right){\bf BF} {\bfP}^{-1}\\
&= \bfC_4 \left({\bfH}^H \NoiCovMat_{\mathrm{R}}^{-1} {\bfY}_\mathrm{R} {\bfS^H}{\bf FD}^{\ast} + \left({\bfL}\bfH\right)^H \NoiCovMat_{\textrm{L}}^{-1} {\bfY}_\mathrm{L} \bfF\right){\bfD} {\bfP}^{-1}.
\end{array}
\end{equation}
The most heavy step in computing \eqref{eq:complex} is the decomposition $\bfB=\bfF \bfD \bfF^H$ (or equivalently the FFT of the blurring kernel), which has a complexity of order $\mathcal{O}(n\log n)$. The calculations of ${\bfH}^H \NoiCovMat_{\mathrm{R}}^{-1} {\bfY}_\mathrm{R} {\bfS^H}{\bf FD}^{\ast}$ and $\left({\bfL}\bfH\right)^H \NoiCovMat_{\textrm{L}}^{-1} {\bfY}_\mathrm{L}\bfF$ require one FFT operation each. All the other computations are made in the frequency domain. Note that the multiplication by ${\bfD} {\bfP}^{-1}$ has a cost of $\mathcal{O}(n)$ operations as ${\bfD}$ is diagonal, and ${\bfP}^{-1}$ reduces to block shifting and addition. The left multiplication with $\bfQ^{-1} \left({\bfH}^H \NoiCovMat_{\mathrm{R}}^{-1} {\bfH}\right)^{-1}$ is of order
$\mathcal{O}(\wtm_{\lambda}^2 n)$. Thus, the calculation of $\bfC_3{\bf BF} {\bfP}^{-1}$ has a total complexity of order  $\mathcal{O}(n \cdot \mathrm{max}\left\{\log n,\wtm_{\lambda}^2\right\})$.


\section{Generalization to Bayesian estimators}
\label{sec:gen_Fast}
As mentioned in Section \ref{subsec:opt_ima}, if the matrix $\bfB \undS \bfB^H$ is singular or ill-conditioned (e.g., when
the number of MS bands is smaller than the dimension of the subspace spanned by the pixel vectors,
i.e., $n_{\lambda}<\wtm_{\lambda}$), a regularization or prior information $p\left(\bfU\right)$ has to be introduced to ensure the Sylvester matrix  equation \eqref{eq:sylv_3} has a unique solution. The resulting estimator $\bfU$ can then be interpreted as a Bayesian estimator. Combining the likelihood \eqref{eq:Likelihood_joint} and the prior $p\left(\bfU\right)$, Bayes theorem provides the posterior distribution of $\bfU$
\begin{equation*}
\label{eq:posterior_joint}
\begin{array}{lll}
  p\left(\bfU|\ALLobs\right) &\propto & p\left(\ALLobs|\bfU\right)p\left(\bfU\right) \\
  &\propto & p\left(\bfY_{\mathrm{L}}|\bfU\right)p\left(\bfY_{\mathrm{R}}|\bfU\right)p\left(\bfU\right)\\
\end{array}
\end{equation*}
where $\propto$ means ``proportional to'' and where we have used the independence between the observation vectors $\bfY_{\mathrm{L}}$ and $\bfY_{\mathrm{R}}$.


The mode of the posterior distribution $ p\left(\bfU|\ALLobs\right)$ is the so-called MAP estimator which
can be obtained by solving the following optimization problem 
\begin{equation}
\label{eq:OptWithU}
\argmin\limits_{\bfU} L(\bfU)
\end{equation}
where
\begin{equation}
\begin{array}{ll}
L(\bfU)=\frac{1}{2} \Tr \left({\left(\bfY_{\mathrm{R}}- \bf{HUBS}\right)^T\NoiCovMat_{\mathrm{R}}^{-1}\left(\bfY_{\mathrm{R}}- \bf{HUBS}\right)} \right) + \\
\quad \quad \frac{1}{2} \Tr \left(\left(\bfY_{\mathrm{L}} -{{\bfL}\bf HU}\right)^T \NoiCovMat_{\mathrm{L}}^{-1}\left(\bfY_{\mathrm{L}} -{{\bfL}\bf HU}\right)\right) -\log p(\bfU).
\end{array}
\end{equation}
Different Bayesian estimators corresponding to different choices of $p(\bfU)$ have been considered in the literature. These estimators are first recalled in the next sections. We will then show that the explicit solution of the SE derived in Section \ref{sec:FastFus} can be used to compute the MAP estimator of $\bfU$ for these prior distributions.


\subsection{Gaussian prior}
\label{subsec:GenGaussian}
Gaussian priors have been used widely in image processing \cite{Hardie1997,Eismann2004,Woods2006},
and can be interpreted as a Tikhonov regularization \cite{Tikhonov1977}. Assume that a matrix normal
distribution is assigned \apriori to the projected target image $\bfU$
\begin{equation}
\label{eq:prior_scene}
 p(\bfU)=\mathcal{MN}_{\wtm_{\lambda},n}(\bs{\mu}, \Covsub, \Id{n})
\end{equation}
where $\bs{\mu}$ and $\Covsub$ are the mean and covariance matrix of the matrix normal distribution. Note that the covariance matrix $\Covsub$ explores the correlations between HS band and controls the distance between $\bfU$ and its mean $\bs{\mu}$. Forcing the derivative of $L(\bfU)$ in \eqref{eq:OptWithU} to be zero leads to the following SE
\begin{equation}
\label{eq:zeroforce_Gaussian}
\bfC_1\bfU + \bfU \bfC_2 =\bfC_3
\end{equation}
where
\begin{equation}
\label{eq:newC1C2}
\begin{array}{ll}
\bfC_1=&\left(\bfH^H \NoiCovMat_{\mathrm{R}}^{-1} \bfH\right)^{-1}\left({\left({\bfL}\bfH\right)}^H \NoiCovMat_{\textrm{L}}^{-1} {{\bfL}\bfH}+\Covsub^{-1}\right)\\
\bfC_2=&{\bfB \undS \bfB}^H \\
\bfC_3=&\left(\bfH^H \NoiCovMat_{\mathrm{R}}^{-1} \bfH\right)^{-1} (\bfH^H \NoiCovMat_{\mathrm{R}}^{-1} {\bfY}_\mathrm{R} \left(\bf BS \right)^H +\\ &{\left({\bfL}\bfH\right)}^H\NoiCovMat_{\textrm{L}}^{-1} {\bfY}_\mathrm{L} +\Covsub^{-1}\bs{\mu}).
\end{array}
\end{equation}
The matrix $\bfC_1$ is positive definite as long as the covariance
matrix $\Covsub^{-1}$ is positive definite. Algo. \ref{Algo:Sylvester_Algo} can thus be adapted
to a matrix normal prior case by simply replacing $\bfC_1$ and $\bfC_3$ by their new expressions defined in \eqref{eq:newC1C2}.

\subsection{Non-Gaussian prior}
\label{subsec:GenNonGaussian}
The objective function $L(\bfU)$ in \eqref{eq:OptWithU} can be split into a data term $f(\bfU)$ corresponding to the likelihood
and a regularization term $\phi(\bfU)$ corresponding to the non-Gaussian prior in a Bayesian framework as
\begin{equation}
\begin{array}{lll}
L(\bfU) =f(\bfU) +\phi(\bfU)
\end{array}
\label{eq:neglog_post}
\end{equation}
where
\begin{equation*}
\begin{array}{rl}
f(\bfU) &= \frac{1}{2} \Tr \left({\left(\bfY_{\mathrm{R}}- \bf{HUBS}\right)^T\NoiCovMat_{\mathrm{R}}^{-1}\left(\bfY_{\mathrm{R}}- \bf{HUBS}\right)} \right)  \\
&+\frac{1}{2} \Tr \left(\left(\bfY_{\mathrm{L}} -{{\bfL}\bf HU}\right)^T \NoiCovMat_{\mathrm{L}}^{-1}\left(\bfY_{\mathrm{L}} -{{\bfL}\bf HU}\right)\right)
\end{array}
\end{equation*}
and
\begin{equation*}
\phi(\bfU)=-\log p\left(\bfU\right).
\end{equation*}
The optimization of \eqref{eq:neglog_post}  w.r.t. $\bfU$ can be solved efficiently
by using an ADMM that consists of two steps: 1) solving a surrogate optimization problem associated with a Gaussian prior and 2) applying a
proximity operator \cite{Combettes2011}. This strategy can be implemented in the image domain or in the frequency domain. The resulting algorithms, named as
SE-within-ADMM (SE-ADMM) methods, are described below.

\subsubsection{Solution in image domain}
Eq. \eqref{eq:neglog_post} can be rewritten as
\label{subsubsec:Format1}
\begin{equation*}
\begin{array}{lll}
L(\bfU,\bfV) = f(\bfU) +\phi(\bfV)
\end{array}
\textrm{s.t. } \bfU=\bfV.
\end{equation*}

The augmented Lagrangian associated with this problem is
\begin{equation}
L_{\mu}(\bfU,\bfV,\bs{\lambda}) = f(\bfU) +\phi(\bfV) + \bs{\lambda}^T ({\bf U-V}) + \frac{\mu}{2}\|\bfU-\bfV\|_F^2
\end{equation}
or equivalently
\begin{equation}
\label{eq:ADMMobj}
L_{\mu}(\bfU,\bfV,\bfW) = f(\bfU) +\phi(\bfV) + \frac{\mu}{2}\|\bfU-\bfV-\bfW\|_F^2
\end{equation}
where $\bfW$ is the scaled dual variable. This optimization problem can be solved by an ADMM as follows
\begin{equation*}
\begin{array}{rl}
(\bfU^{k+1},\bfV^{k+1})=&\argmin\limits_{\bfU,\bfV} f(\bfU) + \phi(\bfV)+ \\
&\frac{\mu}{2}\|\bfU-\bfV-\bfW^{k}\|_F^2\\
\vspace{-0.2cm}
 \bfW^{k+1}=&\bfW^{k}-(\bfU^{k+1}-\bfV^{k+1}).
\end{array}
\end{equation*}
More specifically, the updates of the derived ADMM algorithm are
\begin{equation}
\label{eq:ADMM_F1}
\begin{array}{rl}
\bfU^{k+1}&=\argmin\limits_{\bfU} f(\bfU)+\frac{\mu}{2}\|\bfU-\bfV^{k}-\bfW^{k}\|_F^2\\
\bfV^{k+1}&=\mathrm{prox}_{\phi,\mu}(\bfU^{k+1}-\bfW^{k})\\
\bfW^{k+1}&=\bfW^{k}-(\bfU^{k+1}-\bfV^{k+1}).
\end{array}
\end{equation}
\begin{itemize}
\item \textbf{Update $\bfU$}: Instead of using any iterative update method,
the optimization w.r.t. $\bfU$ can be solved analytically by using Algo. \ref{Algo:Sylvester_Algo} as for the
Gaussian prior investigated in Section \ref{subsec:GenGaussian}. For this, we can set $\bs{\mu}=\bfV^{k}+\bfW^{k}$ and $\Covsub^{-1}={\mu}\Id{\wtm_{\lambda}}$ in \eqref{eq:newC1C2}. However, the computational complexity of updating $\bfU$ in each iteration is $\mathcal{O}(n \log n)$ because of the FFT and iFFT steps required for computing $\bar{\bfC}_3$ and $\bfU$ from $\bar{\bfU}$.
\item \textbf{Update $\bfV$}: The update of $\bfV$ requires computing a proximity operator, which depends on the form of
$\phi(\bfV)$. When the regularizer $\phi(\bfV)$ is simple enough, the proximity operator can be evaluated analytically. For example, if $\phi(\bfV)\equiv \|\bfV\|_1$, then
$$
\mathrm{prox}_{\phi,\mu}(\bfU^{k+1}-\bfW^{k})=\mathrm{soft}\left(\bfU^{k+1}-\bfW^{k},\frac{1}{\mu}\right)
$$
where $\mathrm{soft}$ is the soft-thresholding function defined as
\begin{equation*}
\mathrm{soft}(g,\tau)=\mathrm{sign}(g)\max(|g|-\tau,0).
\end{equation*}
More examples of proximity computations can be found in \cite{Combettes2011}.
\item \textbf{Update $\bfW$}: The update of $\bfW$ is simply a matrix addition whose implementation has a small computational cost.
\end{itemize}

\subsubsection{Solution in frequency domain}
\label{subsubsec:Format2}
\newcommand{\hdcir}[1]{\mathcal{#1}}
Recalling that $\bfB = {\bf{FDF}}^H$, a less computationally expensive solution
is obtained by rewriting $L(\bfU)$ in \eqref{eq:neglog_post} as
\begin{equation*}
\begin{array}{lll}
L(\hdcir{U},\hdcir{V}) = f(\hdcir{U}) +\phi(\hdcir{V})
\end{array}
\textrm{s.t. } \hdcir{U} =\hdcir{V}
\end{equation*}
where $\hdcir{U}={\bf UF}$ is the Fourier transform of $\bfU$ and
\begin{equation*}
\begin{split}
f(\hdcir{U}) = \frac{1}{2} \Tr \left({(\bfY_{\mathrm{R}}- {\bfH} \hdcir{U} {\bf{DF}}^H \bfS)^T\NoiCovMat_{\mathrm{R}}^{-1}(\bfY_{\mathrm{R}}- {\bfH} \hdcir{U} {\bf{DF}}^H \bfS)} \right) \\
\quad \quad + \frac{1}{2} \Tr \left(\left(\bfY_{\mathrm{L}} -{{\bf LH}\hdcir{U} \bfF^H}\right)^T \NoiCovMat_{\mathrm{L}}^{-1}\left(\bfY_{\mathrm{L}} -{{\bf LH}\hdcir{U} \bfF^H}\right)\right)
\end{split}
\end{equation*}
and
$$
\phi(\hdcir{V})=-\log p\left(\hdcir{V}\right).
$$
Thus, the ADMM updates, defined in the image domain by \eqref{eq:ADMM_F1}, can be rewritten in the frequency domain as
\begin{equation}
\label{eq:ADMM_F2}
\begin{array}{rl}
\hdcir{U}^{k+1}&=\argmin\limits_{\hdcir{U}} f(\hdcir{U})+\frac{\mu}{2}\|\hdcir{U}-\hdcir{V}^{k}-\hdcir{W}^{k}\|_F^2\\
\hdcir{V}^{k+1}&=\mathrm{prox}_{\phi,\mu}(\hdcir{U}^{k+1}-\hdcir{W}^{k})\\
\hdcir{W}^{k+1}&=\hdcir{W}^{k}-(\hdcir{U}^{k+1}-\hdcir{V}^{k+1}).
\end{array}
\end{equation}
At the $(k+1)$th ADMM iteration, updating $\hdcir{U}$ can be efficiently conducted thanks to a SE solver similar to Algo. \ref{Algo:Sylvester_Algo}, where the matrix $\bar{\bfC}_3$ is defined by
\begin{equation}
\label{eq:C3_frequency}
\bar{\bfC}_3 = \bfC_s+\bfC_c \left(\hdcir{V}^{k}+\hdcir{W}^{k}\right){\bf DP}^{-1}
\end{equation}
with
\begin{equation*}
\begin{array}{ll}
\bfC_s &=\bfQ^{-1} \left({\bfH}^H \NoiCovMat_{\mathrm{R}}^{-1} {\bfH}\right)^{-1}({\bfH}^H \NoiCovMat_{\mathrm{R}}^{-1} {\bfY}_\mathrm{R} {\bfS^H}{\bf FD}^H \\
 &+  \left({\bfL}\bfH\right)^H \NoiCovMat_{\textrm{L}}^{-1} {\bfY}_\mathrm{L}\bfF ) {\bf DP}^{-1}\\
\bfC_c &=\bfQ^{-1} \left({\bfH}^H \NoiCovMat_{\mathrm{R}}^{-1} {\bfH}\right)^{-1}\Covsub^{-1}.
\end{array}
\end{equation*}
Note that the update of $\bar{\bfC}_3$ does not require any FFT computation since $\bfC_s$ and $\bfC_c$ can be
calculated once and are not updated in the ADMM iterations.

\subsection{Hierarchical Bayesian framework}
\label{subsec:GenHB}
When using a Gaussian prior, a hierarchical Bayesian framework can be constructed by introducing a hyperprior to the
hyperparameter vector $\hypervect=\left\{\bs{\mu},\Covsub\right\}$. Several priors have been investigated in the literature based on generalized Gaussian distributions, sparsity-promoted $\ell_1$ or $\ell_0$ regularizations,
$\ell_2$ smooth regularization, or TV regularization. Denoting as $p(\hypervect)$ the prior of $\hypervect$, the optimization w.r.t. $\bfU$ can be replaced by an optimization w.r.t. $\left(\bfU,\hypervect\right)$ as follows
\begin{equation*}
\begin{array}{rl}
\left(\bfU,\hypervect\right)&=\argmax\limits_{\bfU,\hypervect} p\left(\bfU,\hypervect|\ALLobs\right)\\
&=\argmax\limits_{\bfU,\hypervect} p\left(\bfY_{\mathrm{L}}|\bfU\right)p\left(\bfY_{\mathrm{R}}|\bfU\right)p\left(\bfU|\hypervect\right)p\left(\hypervect\right).\\
\end{array}
\end{equation*}
A standard way of solving this problem is to optimize alternatively between $\bfU$ and $\hypervect$
using the following updates
\begin{equation*}
\begin{array}{rl}
\bfU^{k+1}&=\argmax\limits_{\bfU} p\left(\bfY_{\mathrm{L}}|\bfU\right)p\left(\bfY_{\mathrm{R}}|\bfU\right)p\left(\bfU|\hypervect^{k}\right)\\
\hypervect^{k+1}&=\argmax\limits_{\hypervect} p\left(\bfU^{k+1}|\hypervect\right)p\left(\hypervect\right).
\end{array}
\end{equation*}
The update of $\bfU^{k+1}$ can be solved using Algo. \ref{Algo:Sylvester_Algo}
whereas the update of $\hypervect$ depends on the form of the hyperprior $p\left(\hypervect\right)$. The derived optimization method is named as a SE-within-block
coordinate descent (SE-BCD) method.

It is interesting to note that the strategy of Section \ref{subsec:GenNonGaussian} proposed to handle the case of a non-Gaussian prior can be interpreted as a special case of a hierarchical updating. Indeed, if we interpret $\bf V+d$ and $\frac{1}{\mu}\Id{\wtm_{\lambda}}$ in \eqref{eq:ADMMobj} as the mean $\bs{\mu}$ and covariance matrix $\Covsub$, the ADMM
update \eqref{eq:ADMM_F1} can be considered as the iterative updates of $\bfU$ and
$\bs{\mu}=\bf V+d$ with fixed $\Covsub=\frac{1}{\mu}\Id{\wtm_{\lambda}}$.

\section{Experimental results}
\label{sec:simu}

\newcommand{\figresultwidth}{0.30\textwidth}
\begin{figure*}[t!]
\centering
    \subfigure{
    \label{fig:subfig:Ref}
    \rotatebox{90}{\includegraphics[height=\figresultwidth]{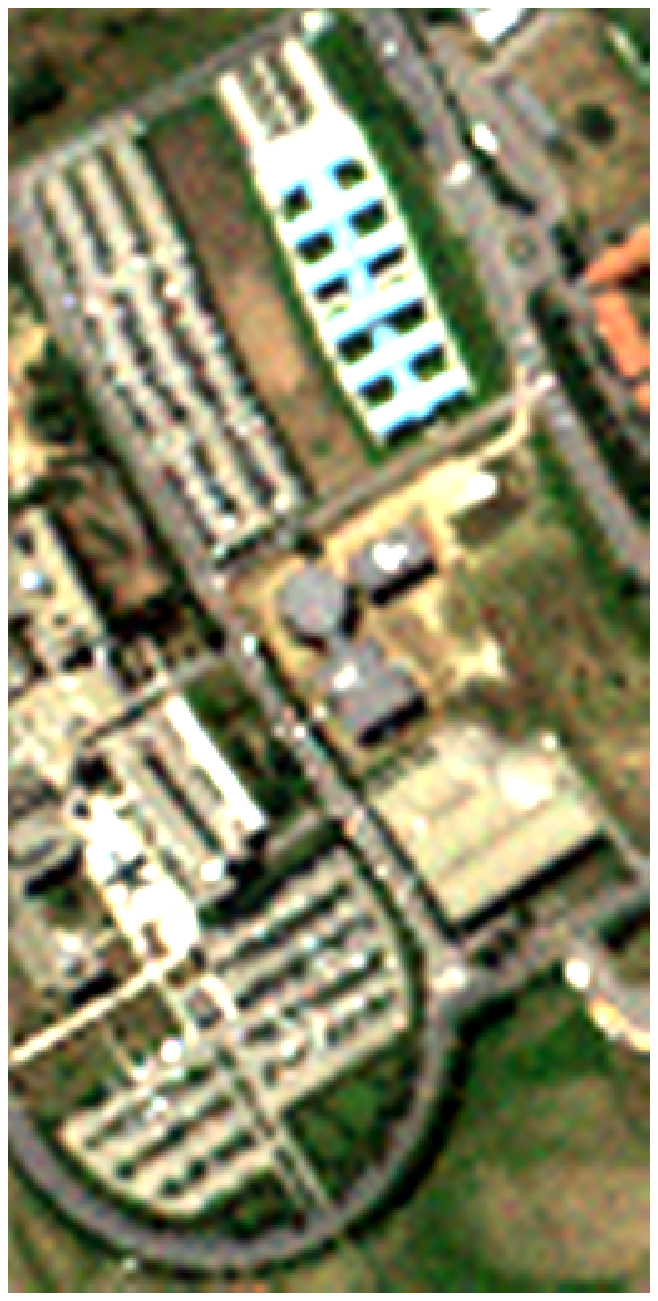}}}
    \subfigure{
    \label{fig:subfig:MS}
    \rotatebox{90}{\includegraphics[height=\figresultwidth]{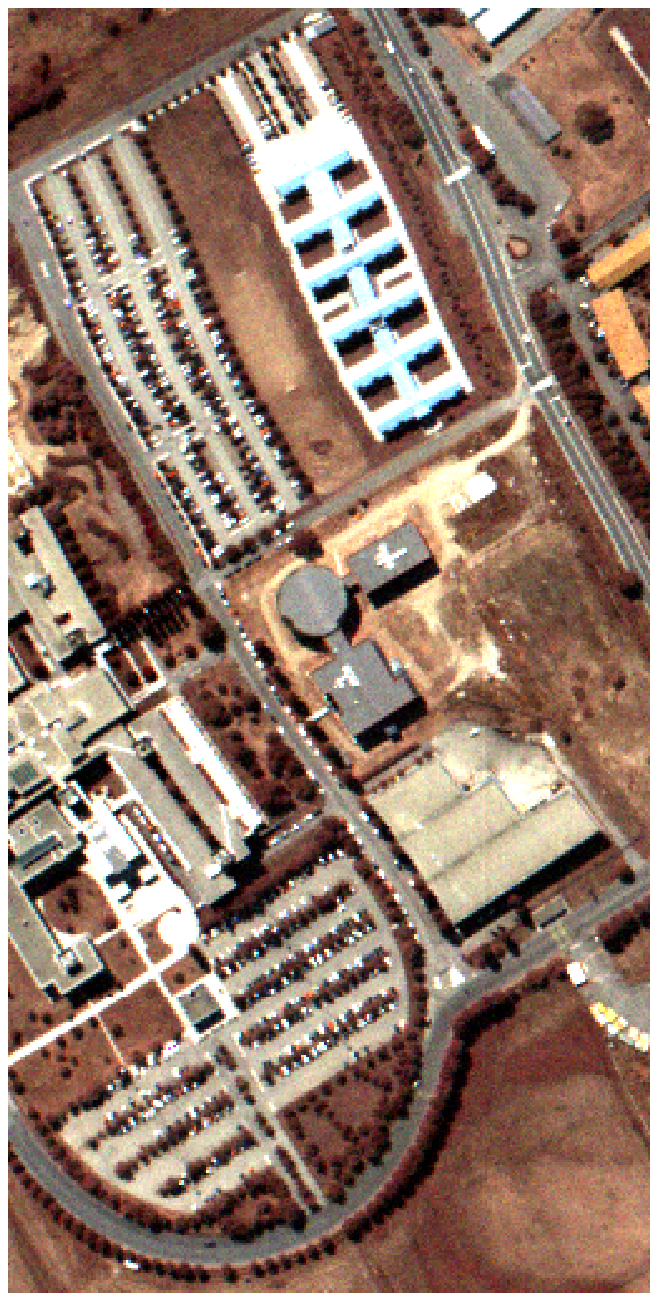}}}
    \subfigure{
    \label{fig:subfig:HS}
    \rotatebox{90}{\includegraphics[height=\figresultwidth]{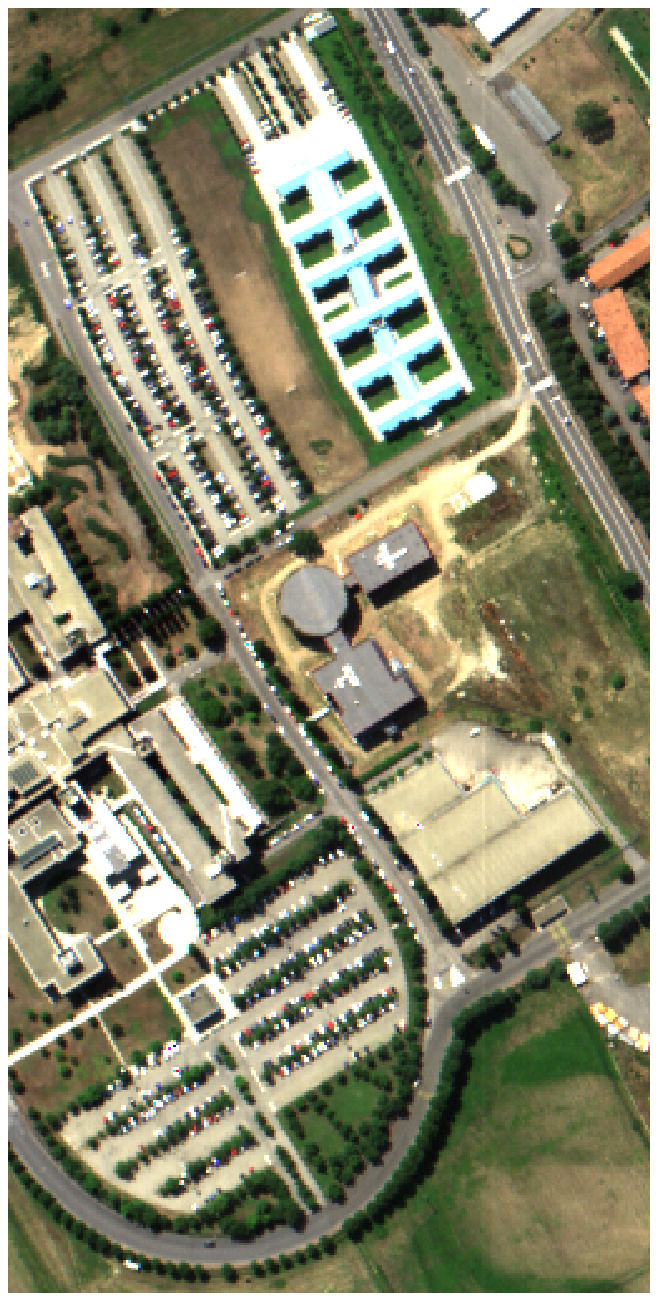}}}\\
    \subfigure{\centering
    \rotatebox{90}{\hspace{1cm} State-of-the-art methods}
    }
    \subfigure{
    \label{fig:subfig:ADMM}
    \includegraphics[width=0.17\textwidth]{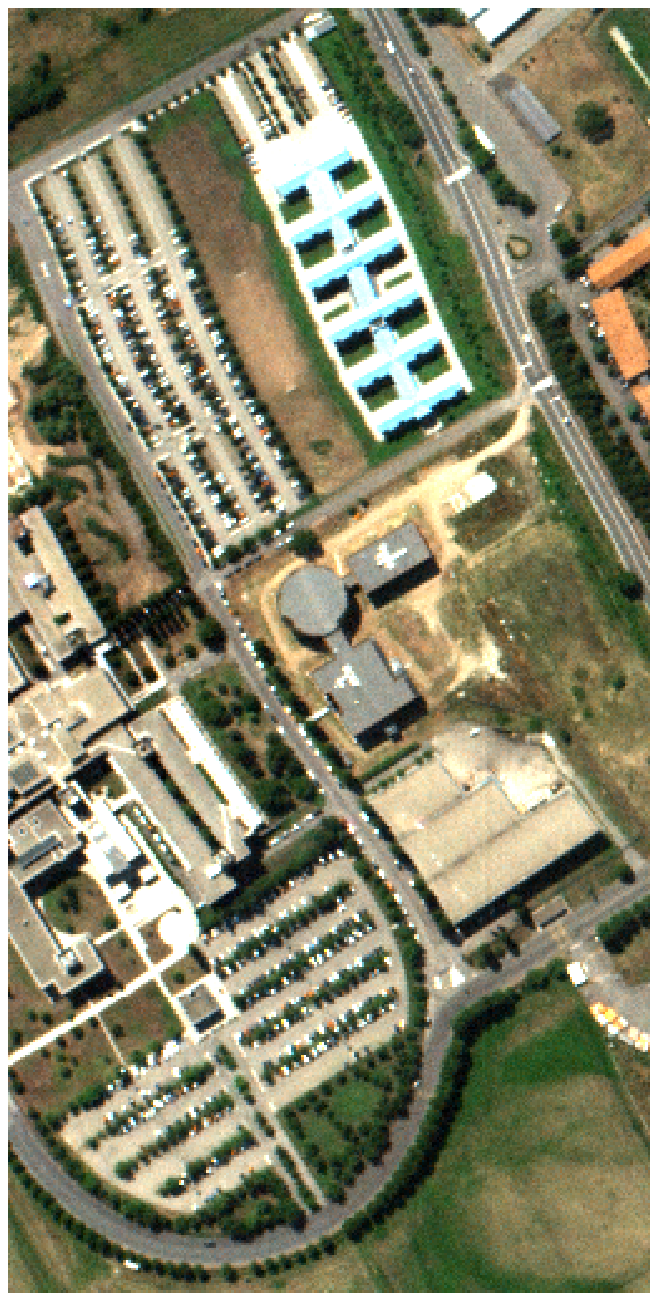}}
    \subfigure{
    \label{fig:subfig:ADMM}
    \includegraphics[width=0.17\textwidth]{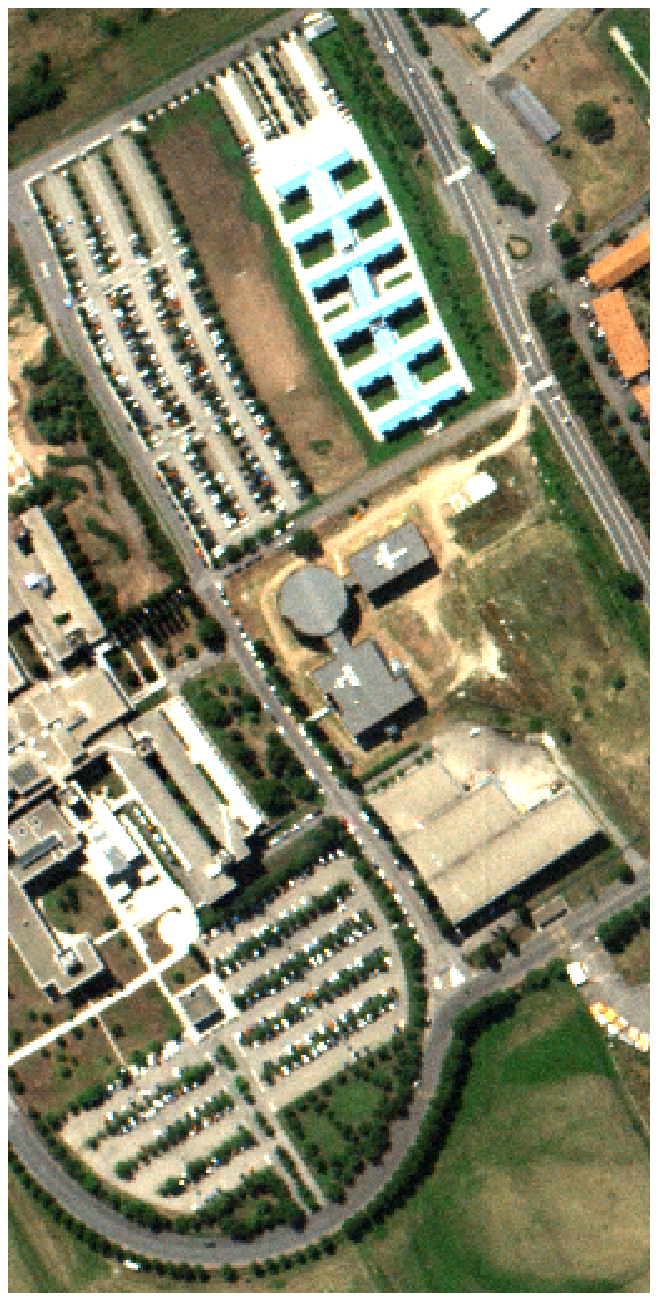}}
    \subfigure{
    \label{fig:subfig:ADMM}
    \includegraphics[width=0.17\textwidth]{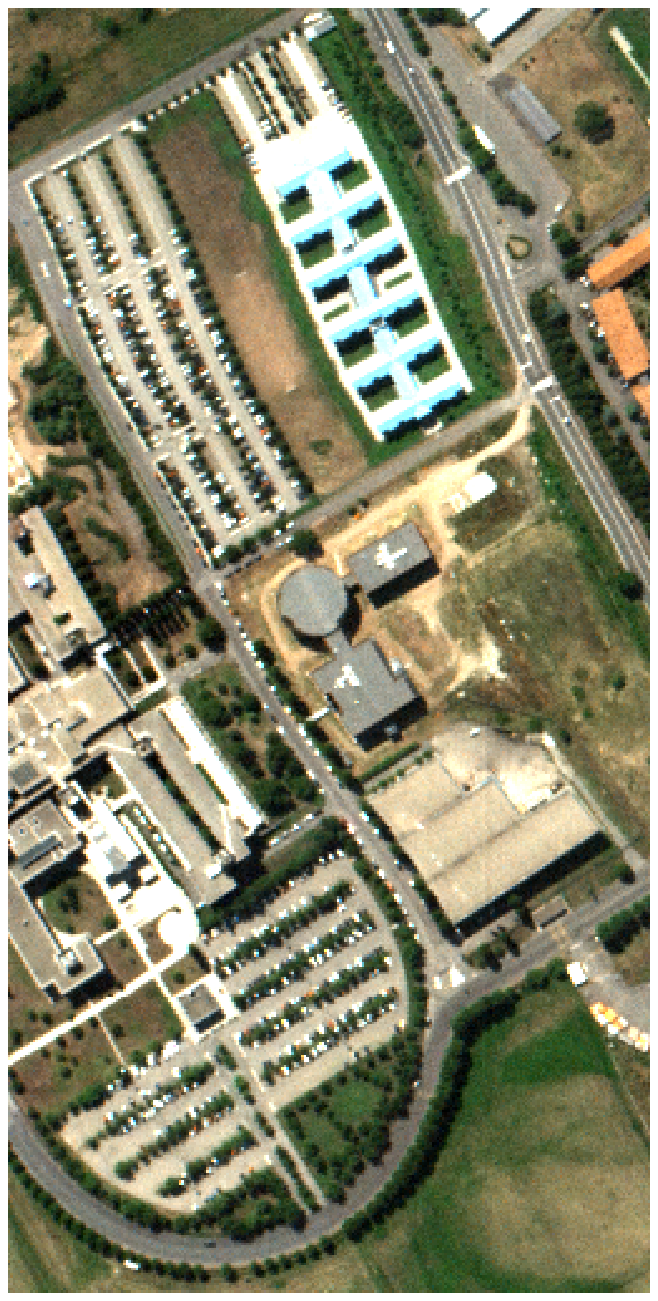}}
    \subfigure{
    \label{fig:subfig:ADMM}
    \includegraphics[width=0.17\textwidth]{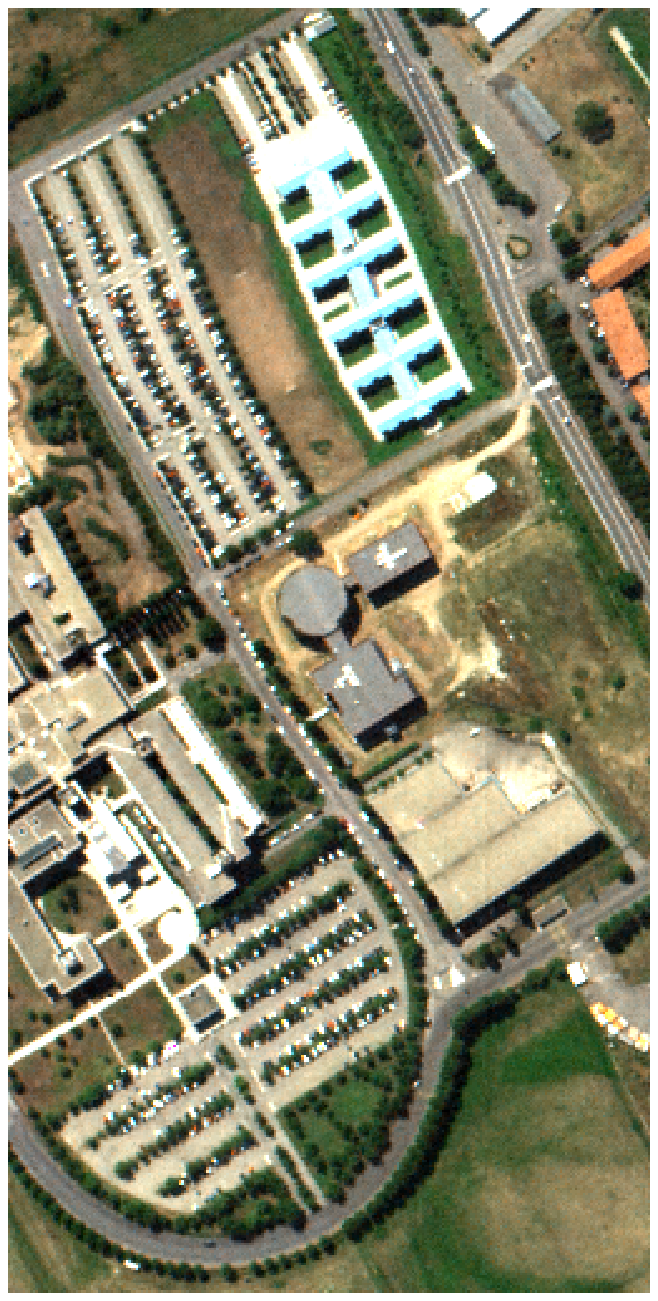}}\\
    \subfigure{
    \rotatebox{90}{\hspace{1.5cm} Fast fusion method}
    }
    \subfigure{
    \label{fig:subfig:SE}
    \includegraphics[width=0.17\textwidth]{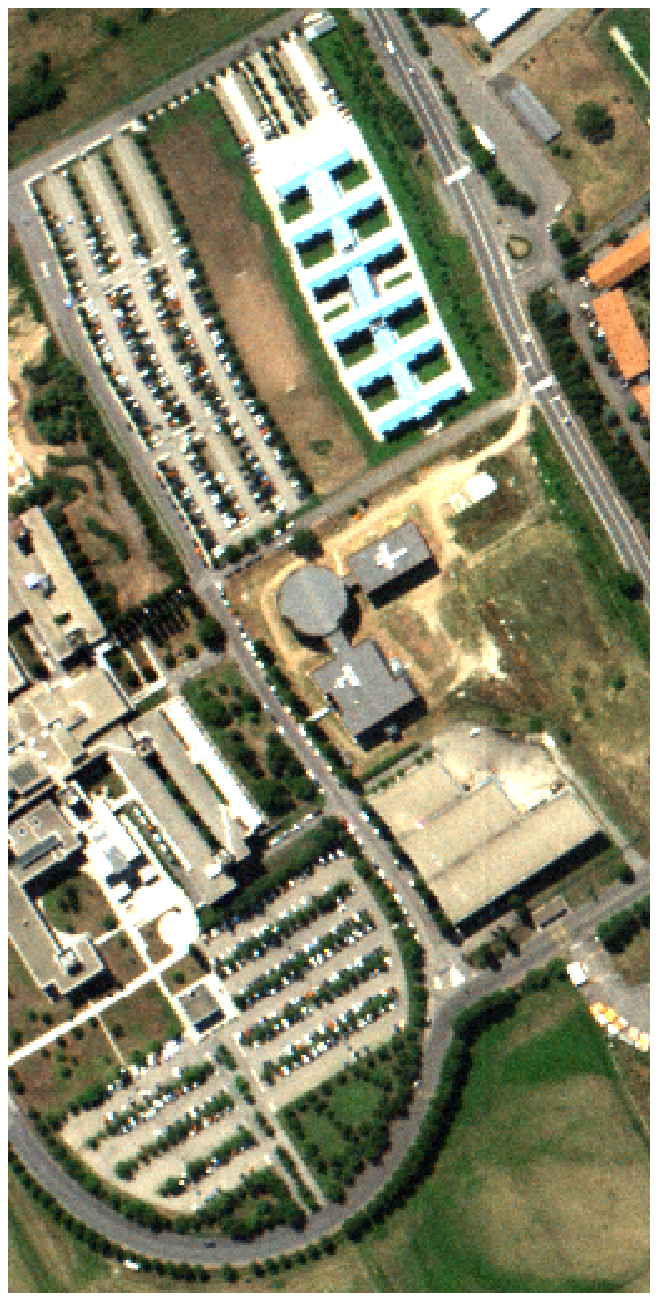}}
    \subfigure{
    \label{fig:subfig:SE}
    \includegraphics[width=0.17\textwidth]{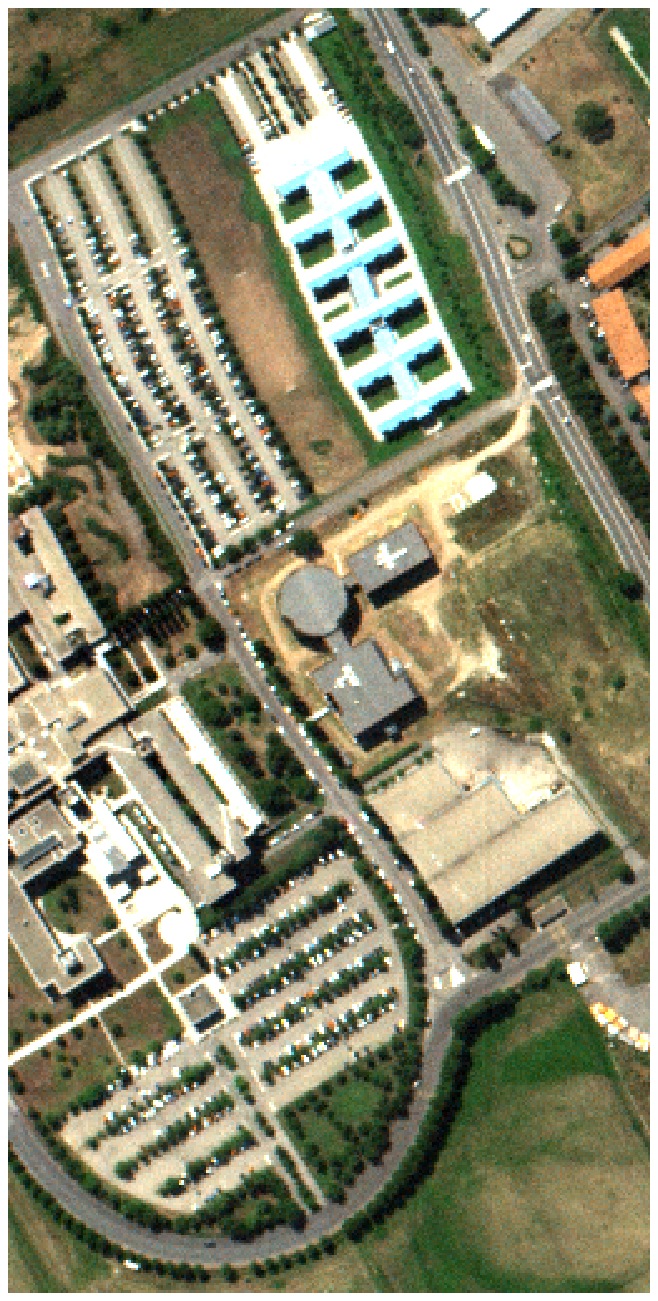}}
    \subfigure{
    \label{fig:subfig:SE}
    \includegraphics[width=0.17\textwidth]{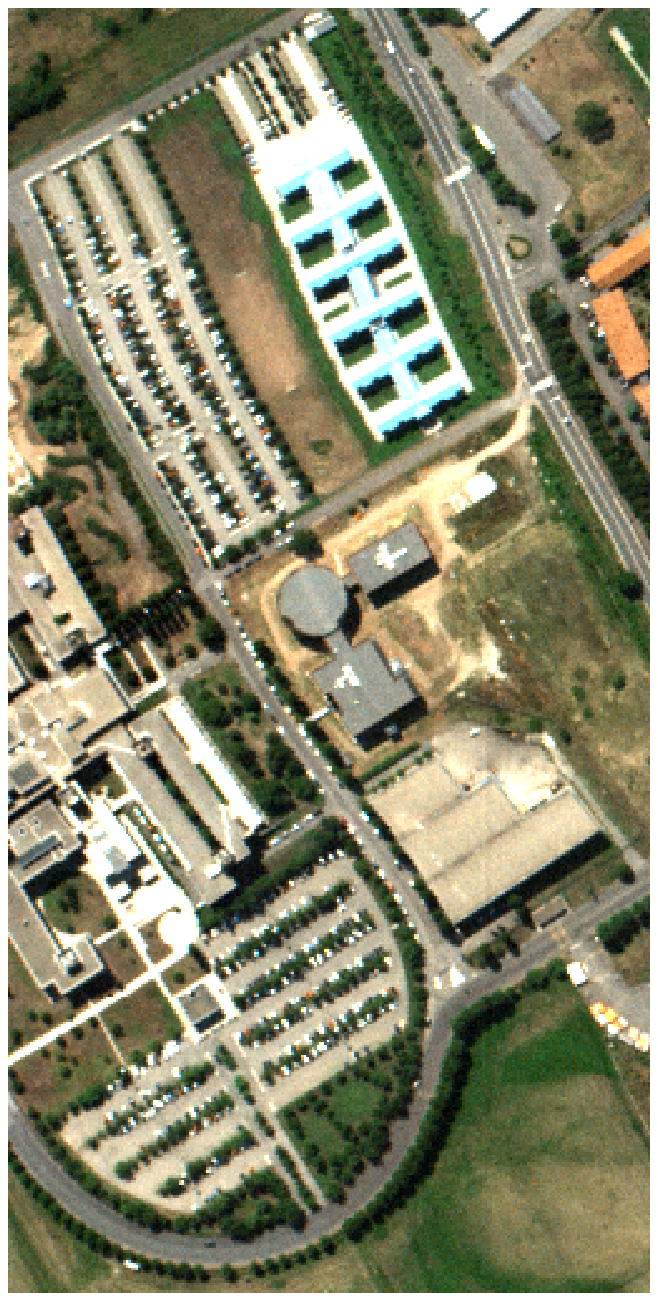}}
    \subfigure{
    \label{fig:subfig:SE}
    \includegraphics[width=0.17\textwidth]{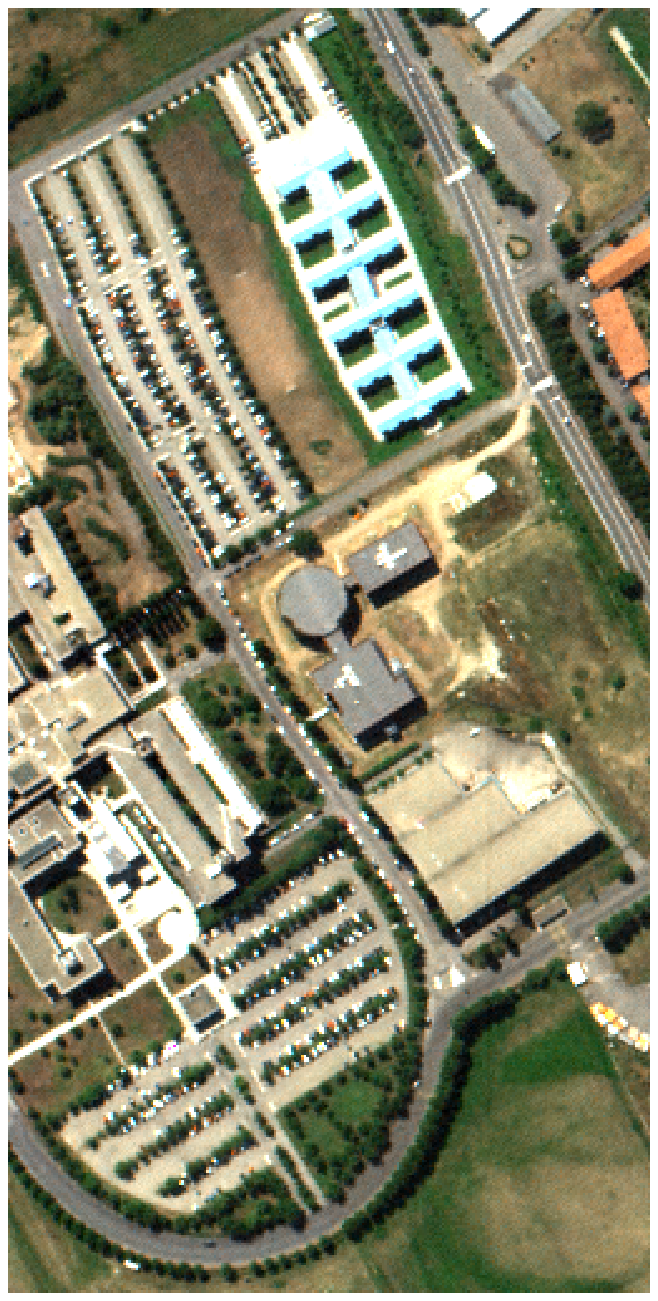}}
    \caption{HS+MS fusion results. Top: HS image ($1$st column), MS image ($2$nd column) and reference image ($3$rd column). Middle and bottom: state-of-the-art-methods and corresponding proposed fast fusion methods, respectively, with various regularizations: supervised naive Gaussian prior ($1$st column), unsupervised naive Gaussian prior ($2$nd column), sparse representation ($3$rd column) and TV ($4$th column).}
\label{fig:results}
\end{figure*}

This section applies the proposed fusion method to three kinds of priors that have been
investigated in \cite{Wei2015whispers}, \cite{Wei2015TGRS} and \cite{Simoes2015} for the fusion
of multi-band images. Note that these three methods require
to solve a minimization problem similar to \eqref{eq:OptWithU}.
All the algorithms have been implemented using MATLAB R2013A on a computer with
Intel(R) Core(TM) i7-2600 CPU@3.40GHz and 8GB RAM. The MATLAB codes and all the simulation 
results are available in the first author's homepage\footnote{\url{http://wei.perso.enseeiht.fr/}}.

\subsection{Fusion Quality Metrics}
\label{subsec:performance}
To evaluate the quality of the proposed fusion strategy, five
image quality measures have been investigated. Referring to \cite{Zhang2009,Wei2015TGRS},
we propose to use the restored signal to noise ratio (RSNR), the averaged spectral angle
mapper (SAM), the universal image quality index (UIQI), the relative dimensionless global
error in synthesis (ERGAS) and the degree of distortion (DD) as quantitative
measures. The RSNR is defined by the negative logarithm of the distance between the estimated
and reference images. The larger RSNR, the better the fusion. The definition of SAM, UIQI,
ERGAS and DD can be found in \cite{Wei2015TGRS}. The smaller SAM, ERGAS and DD, the better the fusion.
The larger UIQI, the better the fusion.

\subsection{Fusion of HS and MS images}
The reference image considered here as the high-spatial and high-spectral image is 
a $256 \times 128 \times 93$ HS image acquired over Pavia, Italy, by the reflective
optics system imaging spectrometer (ROSIS). This image was initially composed of $115$ bands
that have been reduced to $93$ bands after removing the water vapor absorption bands.
A composite color image of the scene of interest is shown in the top right of Fig. \ref{fig:results}.

Our objective is to reconstruct the high-spatial high-spectral image $\MATima$
from a low-spatial high-spectral HS image $\bfY_{\mathrm{R}}$ and a high-spatial
low-spectral MS image $\bfY_{\mathrm{L}}$. First, $\bfY_{\mathrm{R}}$ has been
generated by applying a $5 \times 5$ averaging filter and by down-sampling every
$4$ pixels in both vertical and horizontal directions for each band of the
reference image. Second, a $4$-band MS image $\bfY_{\mathrm{L}}$ has been obtained by
filtering $\MATima$ with the LANDSAT-like reflectance spectral responses \cite{Fleming2006}.
The HS and MS images are both contaminated by zero-mean additive Gaussian noises.
Our simulations have been conducted with
$\textrm{SNR}_{\mathrm{H},i}=35$dB
for the first $43$ bands of the HS image and $\textrm{SNR}_{\mathrm{H},i} =30$dB for the remaining $50$ bands, with
$$
\textrm{SNR}_{\mathrm{H},i}=10\log \left(\frac{\|\left(\MATima \bf BS\right)_i\|_F^2}{\noisevar{\mathrm{H},i}}\right).
$$
For the MS image
$$
\textrm{SNR}_{\mathrm{M},j}=10\log \left( \frac{\|\left({\bfL} \MATima\right)_j\|_F^2}{\noisevar{\mathrm{M},j}}\right) = 30\textrm{dB}
$$
for all spectral bands.

The observed HS and MS images are shown in the top left and middle figures of Fig. \ref{fig:results}.
Note that the HS image has been interpolated for better visualization and that the MS image has been
displayed using an arbitrary color composition. The subspace transformation matrix $\bfH$
has been defined as the PCA following the strategy of \cite{Wei2015TGRS}.

\subsubsection{Example 1: HS+MS fusion with a naive Gaussian prior}
\label{subsec:fuse_HS_MS}

We first consider a Bayesian fusion model initially proposed in \cite{Wei2013}. This method assumed a naive Gaussian prior for the target image, leading to an $\ell_2$-regularization of the fusion problem. The mean of this Gaussian prior was fixed to an interpolated HS image. The covariance matrix of the Gaussian prior can be fixed \emph{a priori} (supervised fusion) or estimated jointly with the unknown image within a hierarchical Bayesian method (unsupervised fusion). The estimator studied in \cite{Wei2013} was based on a hybrid Gibbs sampler generating samples distributed according to the posterior of interest. An ADMM step embedded in a BCD method (ADMM-BCD) was also proposed in \cite{Wei2015whispers} providing a significant computational cost reduction. This section compares the performance of this ADMM-BCD algorithm with the performances of the SE-based methods for these fusion problems.

\begin{table*}[t!]
\renewcommand{\arraystretch}{1.1}
\centering \caption{Performance of the fusion methods: RSNR (in dB), UIQI, SAM (in degree), ERGAS, DD (in $10^{-3}$) and time (in second).}
\vspace{0.1cm}
\begin{tabular}{|c|c|cccccc|}
\hline
Regularization & Methods & RSNR & UIQI & SAM  & ERGAS & DD & Time \\
\hline
\hline
{supervised }   &ADMM \cite{Wei2015whispers}&  29.295  & 0.9906 & 1.556 & 0.892 &7.121 &88.58\\
\cline{2-8}
naive Gaussian& Proposed SE  &  29.366 &  0.9908 & 1.552 & 0.880 &7.094 & 0.57\\
\hline
\hline
{unsupervised } &ADMM-BCD \cite{Wei2015whispers}&  29.078  & 0.9902 & 1.595 & 0.914 &7.279 &53.90\\
\cline{2-8}
naive Gaussian &Proposed SE-BCD &  29.076 &  0.9902 & 1.624 & 0.914&7.369 & 1.07\\
\hline
\hline
{sparse }  &ADMM-BCD \cite{Wei2015TGRS}&  29.575  & 0.9912 & 1.474 & 0.860 &6.833 &96.09\\
\cline{2-8}
representation&Proposed  SE-BCD&  29.600 &  0.9913 & 1.472 & 0.856 &6.820 &23.72\\
\hline
\hline
\multirow{2}{*}{TV} &ADMM \cite{Simoes2015}&  29.470  & 0.9911 & 1.503 & 0.861 &6.923 &138.98\\
\cline{2-8}
&Proposed SE-ADMM &  29.626 &  0.9914 & 1.478 & 0.846 &6.795&93.94\\
\hline
\end{tabular}
\label{tb:quality}
\end{table*}

For the supervised case, the explicit solution of the SE can be constructed directly following the
Gaussian prior-based generalization in Section \ref{subsec:GenGaussian}. Conversely, for the
unsupervised case, the generalized version denoted SE-BCD and described in Section \ref{subsec:GenHB}
is exploited, which requires embedding the closed-form solution into a BCD
algorithm (refer \cite{Wei2015whispers} for more details). The estimated images obtained
with the different algorithms are depicted in Fig. \ref{fig:results} and are visually very
similar. More quantitative results are reported in Table \ref{tb:quality} and confirm the similar performance of these methods in terms of the various fusion quality measures (RSNR, UIQI, SAM, ERGAS and DD).
However, the computational time of the proposed algorithm is reduced by a factor larger than $100$ (supervised) and $50$ (unsupervised) due to the existence of a closed-form solution for the Sylvester matrix equation.

\subsubsection{Example 2: HS+MS fusion with a sparse representation}
This section investigates a Bayesian fusion model based on a sparse representation introduced in \cite{Wei2015TGRS}. The
basic idea of this approach was to design a prior that results from the sparse decomposition of the target image on a set of dictionaries learned empirically. Some
parameters needed to be adjusted by the operator (regularization parameter, dictionaries and supports) whereas the other parameters (sparse codes) were jointly estimated with the target image. In \cite{Wei2015TGRS}, the MAP estimator associated with this model was reached using an optimization algorithm that consists of an ADMM step embedded in a BCD method (ADMM-BCD). Using the strategy proposed in Section \ref{subsec:GenHB}, this ADMM step can be avoid by exploiting the SE solution. Thus, the performance of the ADMM-BCD algorithm in \cite{Wei2015TGRS} is compared with the performance of the SE-BCD scheme as described in Section \ref{subsec:GenHB}. As shown in Fig. \ref{fig:results} and
Table \ref{tb:quality}, the performance of both algorithms is quite similar. However, the proposed solution exhibits a significant complexity reduction.

\subsubsection{Example 3: HS+MS Fusion with TV regularization}
The third experiment is based on a TV regularization (can be interpreted as a specific instance of a non-Gaussian prior)
studied in \cite{Simoes2015}. The regularization parameter of this method needs to be fixed by the user.
The ADMM-based method investigated in \cite{Simoes2015} requires to compute a TV-based proximity operator
(which increases the computational cost when compared to the previous algorithms). To solve this optimization problem, the frequency domain SE solution derived in Section \ref{subsubsec:Format2} can be embedded in an ADMM algorithm.
The fusion results obtained with the ADMM method of \cite{Simoes2015} and the proposed
SE-ADMM method are shown in Fig. \ref{fig:results} and are quite similar.
Table \ref{tb:quality} confirms this similarity more quantitatively by using
the quality measures introduced in Section \ref{subsec:performance}.
Note that the computational time obtained with the proposed explicit
fusion solution is reduced when compared to the ADMM method.
In order to complement this analysis, the convergence speeds of the
SE-ADMM algorithm and the ADMM method of \cite{Simoes2015}
are studied by analyzing the evolution of the objective function for the
two fusion solutions. Fig. \ref{fig:Conv_curve} shows that the SE-ADMM
algorithm converges faster at the starting phase and gives smoother convergence result.

\begin{figure}[h!]
\centering
\includegraphics[width=0.4\textwidth]{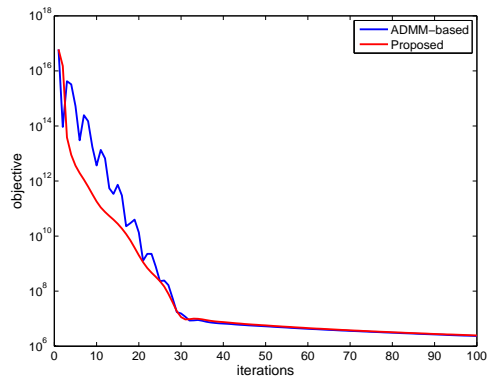}
\caption{Convergence speed of the ADMM \cite{Simoes2015} and the proposed SE-ADMM with
the TV-regularization.}
\label{fig:Conv_curve}
\end{figure}

\subsection{Hyperspectral Pansharpening}

\begin{figure*}[t!]
\centering
    \subfigure{
    \includegraphics[width=0.17\textwidth]{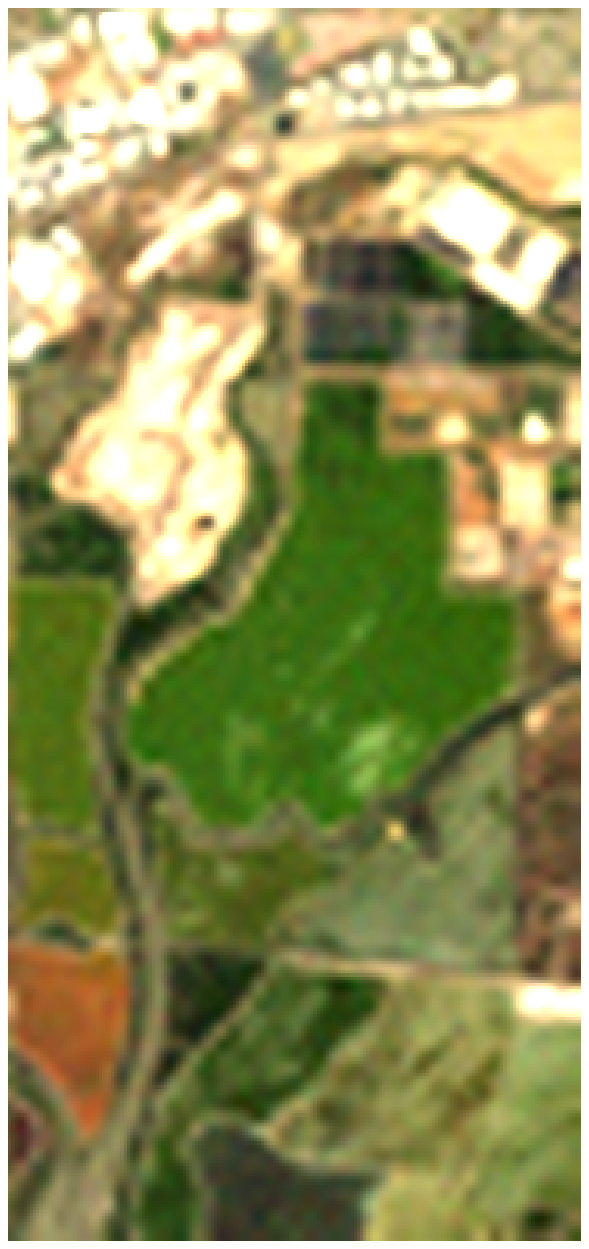}}
    \subfigure{
    \includegraphics[width=0.17\textwidth]{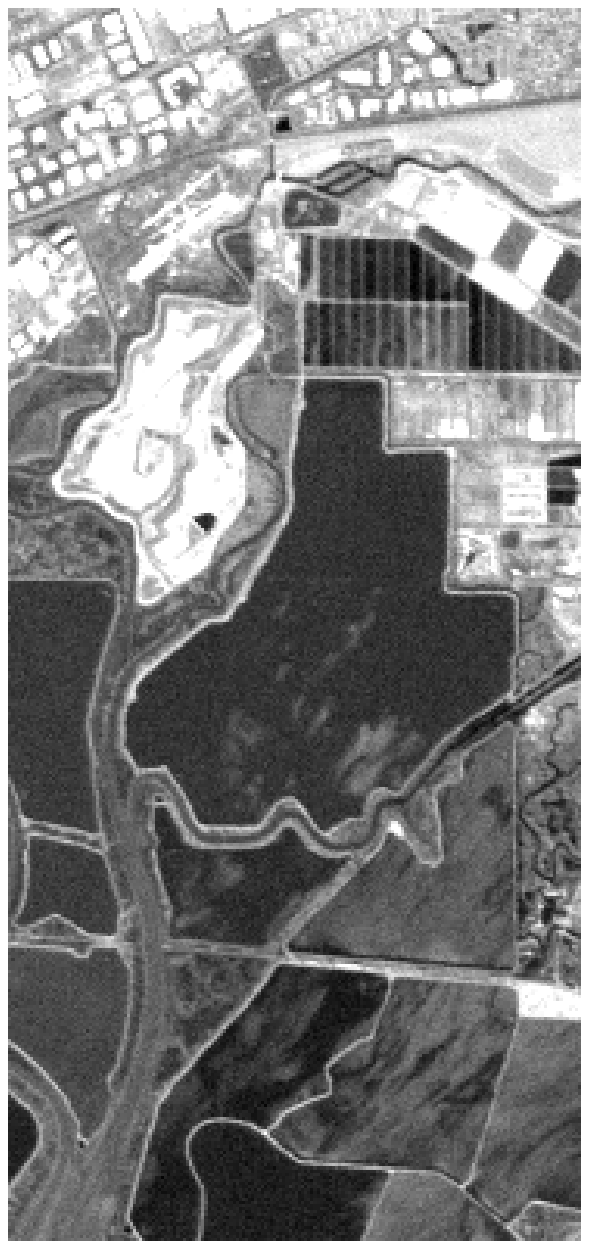}}
    \subfigure{
    \includegraphics[width=0.17\textwidth]{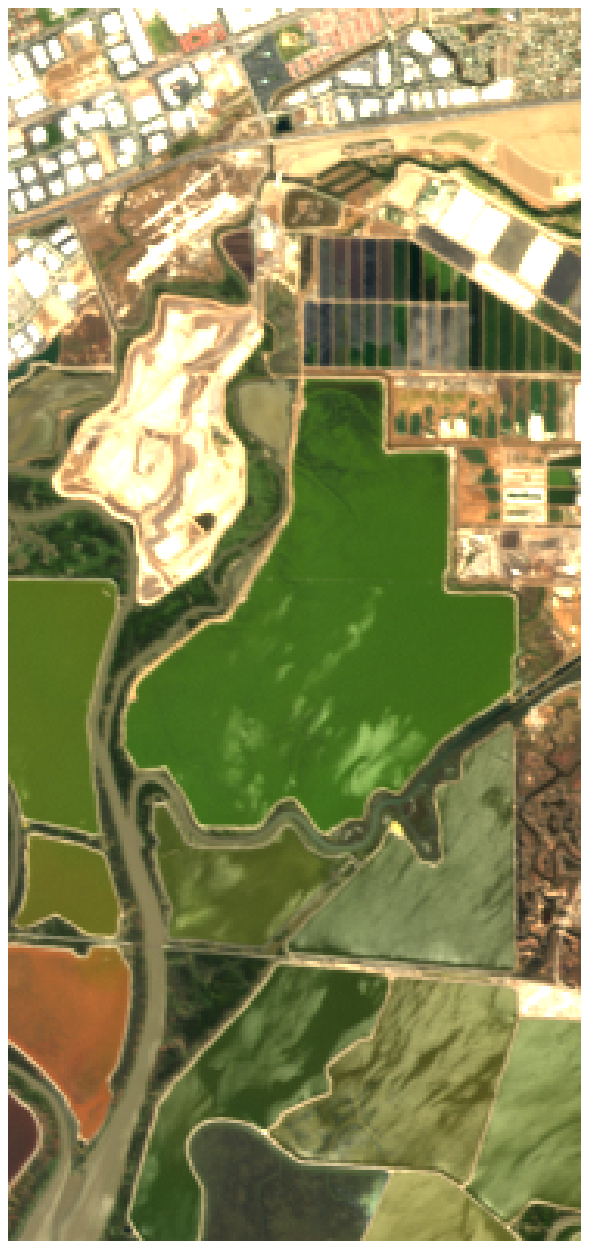}}
    \subfigure{
    \includegraphics[width=0.17\textwidth]{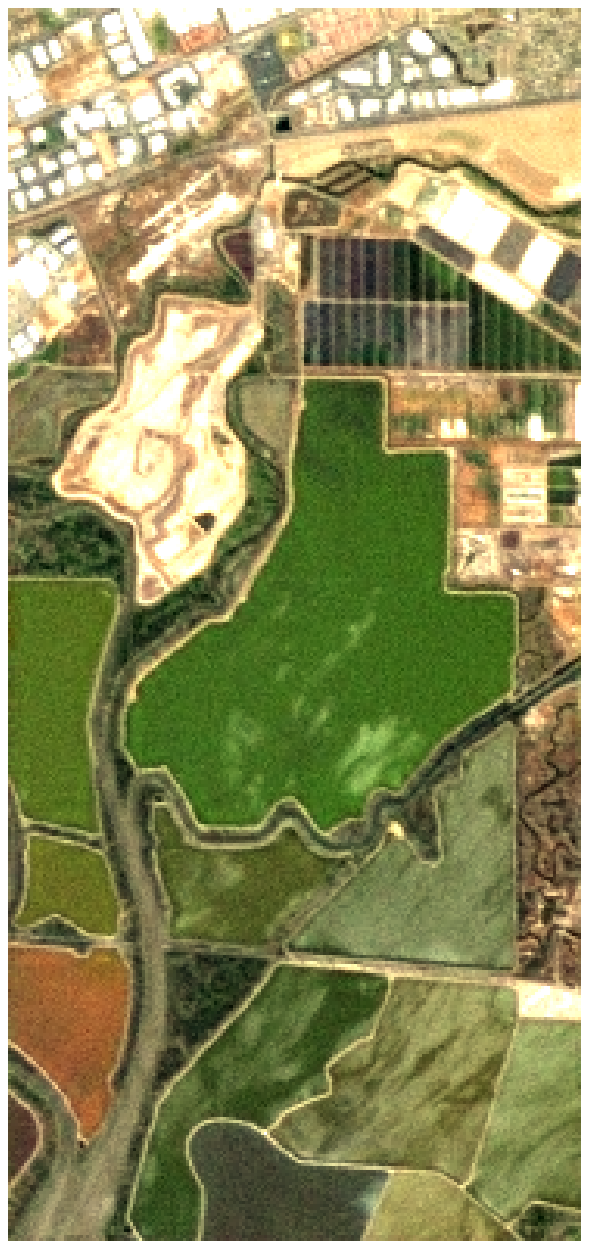}}
    \subfigure{
    \includegraphics[width=0.17\textwidth]{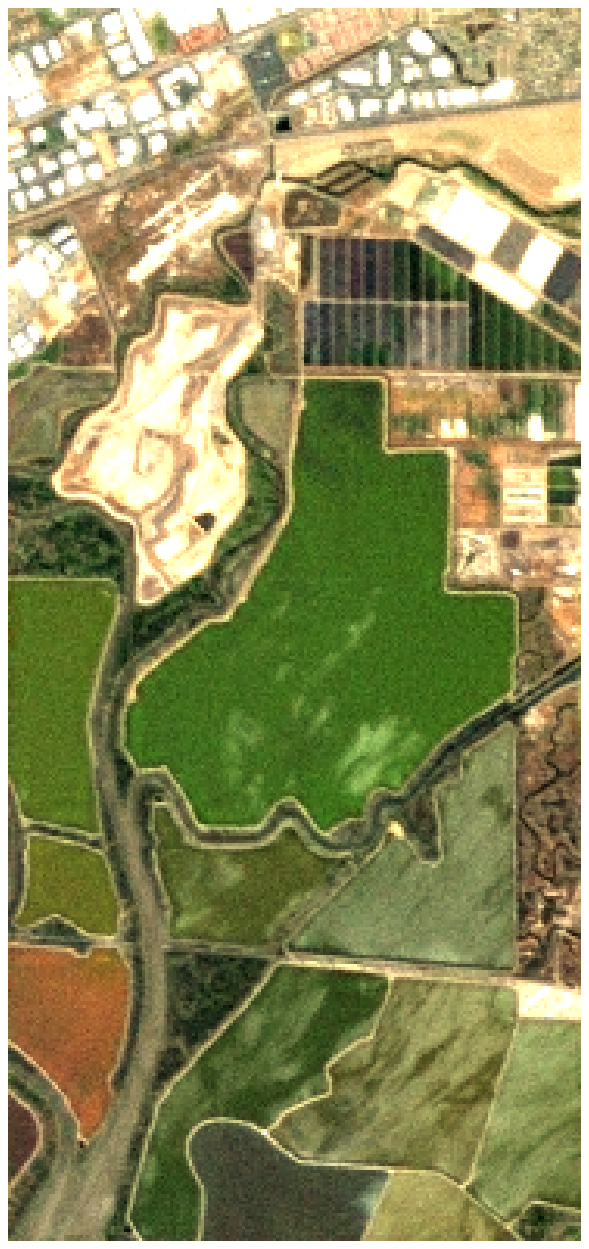}}
    \caption{Hyperspectral pansharpening results. $1$st column: HS image. $2$nd column: PAN image. $3$rd column: Reference image.
    $4$th column: ADMM \cite{Wei2015whispers}. $5$th column: Proposed method. }
\label{fig:HS_PAN_Moffet}
\end{figure*}

When $n_{\lambda}=1$, the fusion of HS and MS images reduces
to the HS pansharpening (HS+PAN) problem, which is the extension
of conventional pansharpening (MS+PAN) and has become an important and popular
application in the area of remote sensing \cite{Loncan2015}.
In order to show that the proposed method is also applicable to this problem,
we conducted HS and PAN image fusion for another HS
dataset. The reference image, considered here as the high-spatial
and high-spectral image, is an hyperspectral image acquired over
Moffett field, CA, in 1994 by the JPL/NASA airborne visible/infrared
imaging spectrometer (AVIRIS) \cite{Green1998imaging}. This image
was initially composed of $224$ bands that have been
reduced to $176$ bands after removing the water vapor absorption bands.
The HS image has been generated by applying a $5 \times 5$ averaging
filter on each band of the reference image. Besides, a PAN image
is obtained by successively averaging the adjacent bands in visible bands
(1$\sim$41 bands) according to realistic spectral responses.
In addition, the HS and PAN images have been both contaminated by
zero-mean additive Gaussian noises. The SNR of the HS image is $35$dB
for the first 126 bands and $30$dB for the last remaining bands.

A naive Gaussian prior\footnote{Due to space limitation, only the Gaussian prior 
of \cite{Wei2015whispers} is considered in this experiment. However, additional 
simulation results for other priors are available in the technical report \cite{Wei2015_TR_FaFusion}.}
is chosen to regularize this ill-posed inverse problem.
The SE based method is compared with the ADMM method of \cite{Wei2015whispers}
to solve the supervised pansharpening problem (i.e., with fixed hyperparameters). The results are displayed in
Fig. \ref{fig:HS_PAN_Moffet} whereas more quantitative results are reported in
Table \ref{tb:quality_PAN_Moffet}. Again, the proposed SE-based method provides
similar qualitative and quantitative fusion results with a significant computational
cost reduction.

\begin{table}[h!]
\renewcommand{\arraystretch}{1.1}
\setlength{\tabcolsep}{0.9mm}
\centering \caption{Performance of the Pansharpening methods: RSNR (in dB), UIQI, SAM (in degree), ERGAS, DD (in $10^{-2}$) and time (in second).}
\vspace{0.1cm}
\begin{tabular}{|c|cccccc|}
\hline
 Methods & RSNR & UIQI & SAM  & ERGAS & DD & Time \\
\hline
\hline
ADMM \cite{Wei2015whispers} &  18.616  & 0.9800 & 4.937 & 3.570 &1.306 &98.12\\
\hline
Proposed SE &  18.652 &  0.9802 & 4.939 & 3.555 &1.302 & 0.54\\
\hline
\end{tabular}
\label{tb:quality_PAN_Moffet}
\end{table}

\section{Conclusion}
\label{sec:concls}
This paper developed a fast multi-band image fusion method
based on an explicit solution of a Sylvester equation.
This method was applied to both the fusion of multispectral
and hyperspectral images and to the fusion of panchromatic
and hyperspectral images. Coupled with the alternating direction method
of multipliers and block coordinate descent, the proposed algorithm
can be easily generalized to compute Bayesian estimators for the fusion problem.
Besides, the analytically solution can be embedded in block coordinate
descent algorithm to solve the hierarchical
Bayesian estimation problem. Numerical
experiments showed that the proposed fast fusion method compares
competitively with the ADMM based methods, with the advantage
of reducing the computational complexity significantly.
Future work will consist of incorporating learning of the subspace
transform matrix $\bfH$ into the fusion scheme. Implementing the proposed
fusion scheme in real datasets will also be interesting.

\begin{appendices}
\section{Proof of Lemma \ref{lem:eig_sym}}
\label{app:eig_sym}
As $\bfA_1$ is symmetric (resp. Hermitian) positive definite, $\bfA_1$ can be
decomposed as $\bfA_1=\bfA_1^{\frac{1}{2}}\bfA_1^{\frac{1}{2}}$,
where $\bfA_1^{\frac{1}{2}}$ is also symmetric (resp. Hermitian) positive definite thus invertible.
Therefore, we have
\begin{equation}
\bfA_1 \bfA_2=  \bfA_1^{\frac{1}{2}} \left( \bfA_1^{\frac{1}{2}} \bfA_2  \bfA_1^{\frac{1}{2}} \right)
 \bfA_1^{-\frac{1}{2}} .
\end{equation}
As $\bfA_1^{\frac{1}{2}}$ and $ \bfA_2$ are both symmetric (resp. Hermitian) matrices,
$\bfA_1^{\frac{1}{2}} \bfA_2  \bfA_1^{\frac{1}{2}}$ is also a symmetric (resp. Hermitian) matrix that can be diagonalized. As a consequence, $\bfA_1\bfA_2$ is similar to a diagonalizable matrix, and thus it is diagonalizable.

Similarly, $\bfA_2$ can be written as $\bfA_2=\bfA_2^{\frac{1}{2}}\bfA_2^{\frac{1}{2}}$, where $\bfA_2^{\frac{1}{2}}$
is positive semi-definite. Thus, $\bfA_1^{\frac{1}{2}} \bfA_2  \bfA_1^{\frac{1}{2}}=\bfA_1^{\frac{1}{2}} \bfA_2^{\frac{1}{2}} \bfA_2^{\frac{1}{2}}  \bfA_1^{\frac{1}{2}}$
is positive semi-definite showing that all its eigenvalues are non-negative. As similar matrices share
equal similar eigenvalues, the eigenvalues of $\bfA_1\bfA_2$ are non-negative.

\section{Proof of Lemma \ref{lem:MMat}}
\label{app:MMat}
First, we introduce the following lemma.
\begin{lemma}
The following equality holds
\begin{equation*}
{\bfF}^H \undS {\bfF} = \frac{1}{d}\bfJ_{d} \otimes \Id{m}
\end{equation*}
where ${\bfF}$ and $\undS$ are defined as in Section \ref{subsec:opt_ima}, $\bfJ_d$ is the $d \times d$ matrix of
ones and $\Id{m}$ is the $m \times m$ identity matrix.
\label{lemm:3}
\end{lemma}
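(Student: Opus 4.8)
The plan is to prove the identity by a direct entrywise computation of $\bfF^H \undS \bfF$, exploiting the fact that $\undS = \bfS\bfS^H$ is nothing but a diagonal $\{0,1\}$ masking matrix that retains the $m$ uniformly sampled positions (the sampling grid anchored at the origin) and annihilates the remaining $n-m$ ones. Writing $\bfF$ as the unitary DFT matrix with entries $F_{kl} = \frac{1}{\sqrt n}\omega^{kl}$, $\omega = e^{-2\pi i/n}$, so that $(\bfF^H)_{jk}=\frac{1}{\sqrt n}\omega^{-kj}$, the $(j,l)$ entry reduces to a character sum restricted to the index set $\mathcal{I}$ of sampled positions, namely $(\bfF^H \undS \bfF)_{jl} = \frac{1}{n}\sum_{k\in\mathcal{I}} \omega^{k(l-j)}$. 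Everything then hinges on evaluating this sum over $\mathcal{I}$.

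First I would treat the one-dimensional case, where $n=md$ and $\mathcal{I}=\{0,d,2d,\ldots,(m-1)d\}$ is the progression of multiples of $d$. Substituting and using $\omega^{d}=e^{-2\pi i/m}$ turns the sum into the geometric sum $\frac1n\sum_{t=0}^{m-1}e^{-2\pi i t(l-j)/m}$, which by orthogonality of the $m$-point characters equals $\frac{m}{n}=\frac1d$ when $l\equiv j \pmod m$ and $0$ otherwise (the anchoring at the origin is what kills the spurious phase factor $\omega^{k_0(l-j)}$ that a nonzero offset would introduce). Hence $(\bfF^H\undS\bfF)_{jl}=\frac1d$ exactly when $j\equiv l \pmod m$. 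Writing the index as $p=\alpha m+\beta$ with block label $\alpha\in\{0,\ldots,d-1\}$ and offset $\beta\in\{0,\ldots,m-1\}$, the condition $j\equiv l\pmod m$ is precisely $\beta_j=\beta_l$, which is the sparsity pattern of $\bfJ_d\otimes\Id{m}$; this identifies the matrix with $\frac1d\bfJ_d\otimes\Id{m}$ and settles the one-dimensional claim.

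For the genuinely two-dimensional setting I would invoke separability: the $2$-D DFT factors as $\bfF=\bfF_r\otimes\bfF_c$ and the mask factors as $\undS=\undS_r\otimes\undS_c$ (a pixel survives iff its row and column indices are multiples of $d_r$ and $d_c$), so that $\bfF^H\undS\bfF=(\bfF_r^H\undS_r\bfF_r)\otimes(\bfF_c^H\undS_c\bfF_c)$. Applying the one-dimensional result to each factor yields $\frac{1}{d_r}(\bfJ_{d_r}\otimes\Id{m_r})\otimes\frac{1}{d_c}(\bfJ_{d_c}\otimes\Id{m_c})$. The main obstacle — indeed the only subtle point — is bookkeeping of the index ordering: collapsing this fourfold Kronecker product into the desired $\frac1d\bfJ_d\otimes\Id{m}$ requires commuting the two inner factors $\Id{m_r}$ and $\bfJ_{d_c}$, i.e.\ passing to the polyphase ordering of the pixels in which the $d$ sampling cosets are grouped first. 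Under that ordering — the one implicitly fixed by the block partition of $\EigSq$ used in Lemma~\ref{lem:MMat} — the two expressions coincide via $\bfJ_{d}=\bfJ_{d_r}\otimes\bfJ_{d_c}$ and $\Id{m}=\Id{m_r}\otimes\Id{m_c}$, which completes the proof. I expect the character-sum evaluation to be routine and the careful justification of this reordering to be where the real care is needed.
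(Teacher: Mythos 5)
Your core computation coincides with the paper's. The paper writes $\undS=\bfE_1+\bfE_{1+d}+\cdots+\bfE_{1+(m-1)d}$ and evaluates $\bfF^H\undS\bfF$ as the sum of the rank-one matrices $\bff_i^H\bff_i$ over the sampled indices; the $(j,l)$ entry of that sum is exactly your character sum $\frac{1}{n}\sum_{k\in\mathcal{I}}\omega^{k(l-j)}$, and the orthogonality step identifying the sparsity pattern $j\equiv l\pmod m$ with $\frac{1}{d}\bfJ_{d}\otimes\Id{m}$ is the same in both. Where you genuinely add something is the two-dimensional case: the paper's proof tacitly assumes the retained pixels sit at the linear indices $1,1+d,\ldots,1+(m-1)d$, which is the one-dimensional picture and not literally true for a lexicographically vectorized image decimated by $d=d_r\times d_c$. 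Your separability argument ($\bfF=\bfF_r\otimes\bfF_c$, $\undS=\undS_r\otimes\undS_c$) together with the observation that collapsing $\bigl(\bfJ_{d_r}\otimes\Id{m_r}\bigr)\otimes\bigl(\bfJ_{d_c}\otimes\Id{m_c}\bigr)$ into $\bfJ_{d}\otimes\Id{m}$ requires passing to the polyphase (coset-grouped) ordering supplies precisely the justification the paper leaves implicit, and it is consistent with the block partition of $\EigSq$ used in Lemma~\ref{lem:MMat}. Both arguments are correct for the statement as used; yours is the more careful one on this ordering point.
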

\begin{proof}
See Appendix \ref{app:lemma3proof}.
\end{proof}
According to Lemma \ref{lemm:3}, we have
\begin{equation}
\label{eq:block_prod}
{\bf{F}}^H \undS\bfF\EigSq = \frac{1}{d} \left(\bfJ_{d} \otimes \Id{m}\right)\EigSq
=\frac{1}{d}\left[
\begin{array}{cccc}
\EigSq_1 & \EigSq_2 & \cdots & \EigSq_d \\
\vdots & \vdots & \ddots & \vdots \\
\EigSq_1 & \EigSq_2 & \cdots & \EigSq_d \\
\end{array}
\right]
\end{equation}
Thus, multiplying \eqref{eq:block_prod} by $\bfP$ on the left side and by $\bfP^{-1}$
on the right side leads to
\begin{equation*}
\begin{array}{ll}
&\bfM = \bfP \left({\bfF}^H \undS {\bfF}\EigSq \right) \bfP^{-1} \\
& = \frac{1}{d}\left[
\begin{array}{ccccc}
{\EigSq}_i &{\EigSq}_2 &\cdots &{\EigSq}_d\\
\bs{0}& \bs{0}& \cdots &\bs{0}\\
\vdots&\vdots &\ddots &\vdots\\
\bs{0}& \bs{0}& \cdots &\bs{0}
\end{array}
\right] \bfP^{-1}\\
& = \frac{1}{d}\left[
\begin{array}{ccccc}
\sum\limits_{i=1}^{d}{\EigSq}_i &{\EigSq}_2 &\cdots &{\EigSq}_d\\
\bs{0}& \bs{0}& \cdots &\bs{0}\\
\vdots&\vdots &\ddots &\vdots\\
\bs{0}& \bs{0}& \cdots &\bs{0}
\end{array}
\right]
\end{array}
\label{eq:temp1}
\end{equation*}

\section{Proof of Theorem \ref{the:Ubar}}
\label{app:theorem}
Substituting \eqref{eq:MMat} and \eqref{eq:U_vector} into \eqref{eq:sylv_3} leads to \eqref{eq:element_wise}, where
\begin{equation}
\bar{\bfC}_3=\left[
\begin{array}{ccccc}
(\bar{\bfC}_3)_{1,1} & (\bar{\bfC}_3)_{1,2}&\cdots &(\bar{\bfC}_3)_{1,d}\\
(\bar{\bfC}_3)_{2,1} & (\bar{\bfC}_3)_{2,2}&\cdots &(\bar{\bfC}_3)_{2,d}\\
\vdots    & \vdots     & \ddots & \vdots \\
(\bar{\bfC}_3)_{d,1} & (\bar{\bfC}_3)_{d,2}&\cdots &(\bar{\bfC}_3)_{d,d}
\end{array}
\right].
\end{equation}
\begin{figure*}[!ht]
\begin{equation}
\label{eq:element_wise}
\left[
\begin{array}{ccccc}
\bar{\bfu}_{1,1} \left(\frac{1}{d}\sum\limits_{i=1}^{d}{\EigSq}_i+\lambda_C^1 \Id{n}\right)& \lambda_C^1\bar{\bfu}_{1,2}+\frac{1}{d}\bar{\bfu}_{1,1}{\EigSq}_2&\cdots &\lambda_C^1 \bar{\bfu}_{1,d}+\frac{1}{d}\bar{\bfu}_{1,1}{\EigSq}_d\\
\bar{\bfu}_{2,1} \left(\frac{1}{d}\sum\limits_{i=1}^{d}{\EigSq}_i+\lambda_C^2 \Id{n}\right)& \lambda_C^2\bar{\bfu}_{2,2}+\frac{1}{d}\bar{\bfu}_{2,1}{\EigSq}_2 &\cdots &\lambda_C^2 \bar{\bfu}_{2,d}+\frac{1}{d}\bar{\bfu}_{2,1}{\EigSq}_d\\
\vdots    & \vdots     & \ddots & \vdots\\
\bar{\bfu}_{\wtm_{\lambda},1} \left(\frac{1}{d}\sum\limits_{i=1}^{d}{\EigSq}_i+\lambda_C^{\wtm_{\lambda}} \Id{n}\right)
& \lambda_C^{\wtm_{\lambda}}\bar{\bfu}_{\wtm_{\lambda},2}+\frac{1}{d}\bar{\bfu}_{\wtm_{\lambda},1}{\EigSq}_2 & \cdots & \lambda_C^{\wtm_{\lambda}}\bar{\bfu}_{\wtm_{\lambda},d}+\frac{1}{d}\bar{\bfu}_{\wtm_{\lambda},1}{\EigSq}_d
\end{array}
\right]
= \bar{\bfC}_3
\end{equation}
\end{figure*}
Identifying the first (block) columns of \eqref{eq:element_wise}
allows us to compute the element $\bar{\bfu}_{1,1}$ for $l=1,...,d$
as follows
\begin{equation*}
\bar{\bfu}_{l,1} = (\bar{\bfC}_3)_{l,1} \left(\frac{1}{d}\sum\limits_{i=1}^{d}{\EigSq}_i+\lambda_C^l \Id{n}\right)^{-1}
\end{equation*}
for $l=1,\cdots,\wtm_{\lambda}$. Using the values of $\bar{\bfu}_{l,1}$ determined above,
it is easy to obtain $\bar{\bfu}_{l,2},\cdots,\bar{\bfu}_{l,d}$ as
\begin{equation*}
\bar{\bfu}_{l,j}=  \frac{1}{\lambda_C^l}\left[(\bar{\bfC}_3)_{l,j} - \frac{1}{d} \bar{\bfu}_{l,1} {\EigSq}_j\right]
\end{equation*}
for $l=1,\cdots,\wtm_{\lambda}$ and $j=2,\cdots,d$.

\section{Proof of Lemma \ref{lemm:3}}
\label{app:lemma3proof}
The $n$ dimensional DFT matrix $\bfF$ can be written explicitly as follows
\begin{equation*}
\bfF = \frac{1}{\sqrt{n}} \begin{bmatrix}
1&1&1&1&\cdots &1 \\
1&\omega&\omega^2&\omega^3&\cdots&\omega^{n-1} \\
1&\omega^2&\omega^4&\omega^6&\cdots&\omega^{2(n-1)}\\ 1&\omega^3&\omega^6&\omega^9&\cdots&\omega^{3(n-1)}\\
\vdots&\vdots&\vdots&\vdots&\ddots&\vdots\\
1&\omega^{n-1}&\omega^{2(n-1)}&\omega^{3(n-1)}&\cdots&\omega^{(n-1)(n-1)}\\
\end{bmatrix}
\end{equation*}
where $\omega = e^{-\frac{2\pi i}{n}}$ is a primitive $n$th root of unity in which $i=\sqrt{-1}$.
The matrix $\undS$ can also be written as follows
\begin{equation*}
\undS = \bfE_1+\bfE_{1+d}+\cdots+\bfE_{1+(m-1)d}
\end{equation*}
where $\bfE_i \in \mathbb{R}^{n\times n}$ is a matrix containing
only one non-zero element equal to 1 located at the $i$th row and
$i$th column as follows
\begin{equation*}
\bfE_i=
 \begin{bmatrix}
0&\cdots&0&\cdots &0 \\
\vdots & \ddots & \vdots & \ddots & \vdots \\
0&\cdots&1&\cdots&0 \\
\vdots & \ddots & \vdots & \ddots & \vdots\\
0&\cdots&0&\cdots &0 \\
\end{bmatrix}.
\end{equation*}
It is obvious that $\bfE_i$ is an idempotent matrix, i.e., $\bfE_i=\bfE_i^2$.
Thus, we have
\begin{equation*}
\bfF^H\bfE_i\bfF = \left(\bfE_i\bfF\right)^H \bfE_i\bfF
=\left[ \bs{0}^T \cdots \bff_i^H \cdots \bs{0}^T \right] \left[
\begin{array}{cc}
\bs{0}\\
\vdots\\
\bff_i\\
\vdots\\
\bs{0}
\end{array}
\right]= \bff_i^H\bff_i
\end{equation*}
where $\bff_i= \frac{1}{\sqrt{n}}\left[ 1 \quad \omega^{i-1} \quad \omega^{2(i-1)} \quad \omega^{3(i-1)}  \cdots \omega^{(n-1)(i-1)} \right]$ is the $i$th row of the matrix $\bfF$ and $\bs{0} \in \mathbb{R}^{1\times n}$ is
the zero vector of dimension $1 \times n$.
Straightforward computations lead to
\begin{equation*}
\bff_i^H\bff_i=\frac{1}{n}
 \begin{bmatrix}
1&\omega^{i-1}&\cdots &\omega^{(i-1)(n-1)} \\
\omega^{-(i-1)}&1&\cdots &\omega^{(i-1)(n-2)}\\
\vdots & \vdots & \ddots & \vdots \\
\omega^{-(i-1)(n-1)}&\omega^{-(i-1)(n-2)}&\cdots &1
\end{bmatrix}.
\end{equation*}
Using the $\omega$'s property $\sum\limits_{i=1}^{n} \omega^{i}=0$ and $n=md$ leads to
\begin{equation*}
\begin{array}{ll}
\bff_1^H\bff_1+\bff_{1+d}^H\bff_{1+d}+\cdots \bff_{1+(m-1)d}^H\bff_{1+(m-1)d}\\
=\frac{1}{n}
\begin{bmatrix}
\begin{bmatrix}
m & 0 & \cdots & 0  \\
0 & m & \cdots & 0 \\
\vdots & \vdots & \ddots & \vdots\\
0 & 0 & \cdots & m \\
\end{bmatrix}& \cdots &\begin{bmatrix}
m & 0 & \cdots & 0  \\
0 & m & \cdots & 0 \\
\vdots & \vdots & \ddots & \vdots\\
0 & 0 & \cdots & m \\
\end{bmatrix} \\
\vdots& \ddots & \vdots &\\
\begin{bmatrix}
m & 0 & \cdots & 0  \\
0 & m & \cdots & 0 \\
\vdots & \vdots & \ddots & \vdots\\
0 & 0 & \cdots & m \\
\end{bmatrix}& \cdots & \begin{bmatrix}
m & 0 & \cdots & 0  \\
0 & m & \cdots & 0 \\
\vdots  & \ddots & \vdots & \vdots\\
0 & 0 & \cdots & m \\
\end{bmatrix}
\end{bmatrix}\\
=\frac{1}{d}
\begin{bmatrix}
\Id{m} & \cdots &\Id{m} \\
\vdots & \ddots &\vdots \\
\Id{m} & \cdots &\Id{m} \\
\end{bmatrix}\\
=\frac{1}{d} \bfJ_{d} \otimes \Id{m}.
\end{array}
\end{equation*}
\end{appendices}
\bibliographystyle{ieeetran}
\bibliography{strings_all_ref,Sylvesterbib}
\end{document}